\documentclass{article}

\usepackage{microtype}
\usepackage{graphicx}
\usepackage{subcaption}
\usepackage{booktabs} 

\usepackage{svg}
\usepackage{color}
\usepackage{float}   
\usepackage{wrapfig}
\usepackage{hyperref}
\usepackage{adjustbox}

\usepackage[preprint]{icml2026}

\usepackage{amsmath}
\usepackage{amssymb}
\usepackage{mathtools}
\usepackage{amsthm}

\usepackage{booktabs}
\usepackage{longtable}
\usepackage{multirow}

\usepackage[capitalize,noabbrev]{cleveref}

\theoremstyle{plain}
\newtheorem{theorem}{Theorem}[section]
\newtheorem{proposition}[theorem]{Proposition}
\newtheorem{lemma}[theorem]{Lemma}
\newtheorem{corollary}[theorem]{Corollary}
\theoremstyle{definition}

\newtheorem{assumption}[theorem]{Assumption}
\theoremstyle{remark}

\usepackage[textsize=tiny]{todonotes}

\icmltitlerunning{{T}ime Series {I}n-Context {C}lassification {F}oundation {M}odel}

\begin{document}

\twocolumn[
  \icmltitle{Rethinking Zero-Shot Time Series Classification: \\From Task-specific Classifiers to In-Context Inference}



  \icmlsetsymbol{equal}{*}

  \begin{icmlauthorlist}
    \icmlauthor{Juntao Fang}{equal,yyy}
    \icmlauthor{Shifeng Xie}{equal,aaa1,comp}
    \icmlauthor{Shengbin Nie}{yyy}
    \icmlauthor{Yuhui Ling}{yyy}
    \icmlauthor{Yuming Liu}{yyy}
    \icmlauthor{Zijian Li}{sch}
    \icmlauthor{Keli Zhang}{aaa1}
    \icmlauthor{Lujia Pan}{aaa2}
    \icmlauthor{Themis Palpanas}{comp}
    \icmlauthor{Ruichu Cai}{yyy}
  \end{icmlauthorlist}

  \icmlaffiliation{yyy}{Guangdong University of Technology, Guangzhou, China}
  \icmlaffiliation{comp}{Paris Descartes University, Paris, France}
  \icmlaffiliation{sch}{Mohamed bin Zayed University of Artificial Intelligence, Abu Dhabi, United Arab Emirates}
  \icmlaffiliation{aaa1}{Huawei Noah’s Ark Lab, Paris, France}
  \icmlaffiliation{aaa2}{Huawei Noah’s Ark Lab, Shenzhen, China}
  \icmlcorrespondingauthor{Ruichu Cai}{cairuichu@gmail.com}

  \icmlkeywords{XXXX, XXXXX}

  \vskip 0.3in
]



\printAffiliationsAndNotice{\icmlEqualContribution}

\begin{abstract}
The zero-shot evaluation of time series foundation models (TSFMs) for classification typically uses a frozen encoder followed by a task-specific classifier. However, this practice violates the training-free premise of zero-shot deployment and introduces evaluation bias due to classifier-dependent training choices. To address this issue, we propose TIC-FM, an in-context learning framework that treats the labeled training set as context and predicts labels for all test instances in a single forward pass, without parameter updates. TIC-FM pairs a time series encoder and a lightweight projection adapter with a split-masked latent memory Transformer. We further provide theoretical justification that in-context inference can subsume trained classifiers and can emulate gradient-based classifier training within a single forward pass. Experiments on 128 UCR datasets show strong accuracy, with consistent gains in the extreme low-label situation, highlighting training-free transfer for time series classification.The source code is publicly available at https://github.com/fangjuntao/TIC-FM.

\end{abstract}


\section{Introduction}
Time series classification is a core task across applications such as human movement analysis, clinical monitoring and diagnosis, digital health, and energy systems, where accurate recognition of temporal patterns directly supports decision-making \citep{TSClassificationSurvey}. Recent progress on time series foundation models (TSFMs) makes it increasingly feasible to deploy a single pretrained backbone across domains \citep{TSFMsurvey}. At the same time, in privacy-sensitive or labor-intensive settings, like healthcare, collecting and annotating labeled time series at scale is often costly or infeasible, which amplifies the need for transfer without extensive task-specific training \citep{gao2025data}. Consequently, strong zero-shot and few-shot generalization is not merely desirable but central to realizing practical for classification with TSFMs.


Despite these aspirations, the evaluation practice for TSFMs remains largely limited to a frozen encoder and task-specific classifier practice. In this setup, the backbone is kept frozen, and a task-specific classifier (e.g., an SVM or an MLP) is trained on top of the extracted embeddings to adapt to downstream datasets. While widely adopted, we argue that this protocol suffers from fundamental methodological shortcomings. First, training task-specific classifiers requires explicit parameter optimization and access to a training set for hyperparameter tuning. This practice conflicts with the zero-shot premise of foundation models, since the model still requires a supervised training step before deployment. Second, the protocol induces substantial evaluation bias and sensitivity. As our empirical analysis shows (see ~\cref{StandardUCR}), the effectiveness of a frozen backbone depends heavily on the choice of task-specific classifier (e.g., SVM vs. MLP). Such dependence inevitably conflates the intrinsic quality of the learned representations with the expressivity of the classifier, thereby obstructing a fair pipeline of the foundation model itself. Finally, training a task-specific classifier is less reliable in the extremely low-shot regime. Optimizing a high-dimensional classifier generally demands a nontrivial sample size for stable convergence; consequently, under data scarcity (e.g., $\leq$ 10), conventional classifiers are more likely to overfit or fail to converge reliably.

To overcome these limitations, we propose a paradigm shift from training task-specific classifiers to  In-Context Learning (ICL). Specifically, we introduce the  \textbf{T}ime Series \textbf{I}n-Context \textbf{C}lassification \textbf{F}oundation \textbf{M}odel (TIC-FM), a framework that treats the entire labeled support set as a contextual prompt without explicit weight updates. By jointly attending to context and query instances, TIC-FM infers task-specific decision boundaries within a single forward pass. The method performs inference without any weight updates, thereby eliminating the need for gradient optimization, classifier selection, and test-time hyperparameter tuning.  Architecturally, TIC-FM combines a time series encoder with a lightweight projection adapter and a split-masked in-context Transformer classifier that uses latent memory to summarize long contexts and injects labels only into support tokens for leakage free parallel inference.

Our main contributions are summarized as follows:
\begin{itemize}
\item  We revisit the prevailing practice of evaluating frozen backbones via task-specific classifiers and instead propose an ICL pipeline. By avoiding classifier selection and its associated hyperparameters, the resulting evaluation is both fairer and more robust.
\item We propose TIC-FM, a specialized ICL architecture designed for time series classification inference. Across 128 UCR datasets, TIC-FM consistently outperforms existing time series foundation models.
\item Building on a well developed proof framework, we formalize how our ICL pipeline emulates gradient descent behavior without explicit parameter updates, offering a theoretical lens on its robust generalization.
\end{itemize}



\section{Related Work}
\textbf{Time series foundation model (TSFM)} leverage large-scale pretraining to learn transferable ability that can be deployed in a zero-shot manner on new domains, mirroring the foundation model paradigm in large language model and vision \citep{TSFMsurvey}. Recent progress has been especially striking in forecasting, where large pretrained models support standardized deployment modes spanning zero-shot evaluation, few-shot adaptation, and task-specific fine-tuning \citep{ansari2024chronos,ansari2025chronos2,timeFM,toto,moirai,auer:25tirex,tempopfn,lagllama}.  In contrast, zero-shot time series classification is still commonly operationalized as frozen feature extraction plus a trained downstream classifier, which is not train-free when moving to a new labeled dataset \citep{mantis,nutime,timesbert,cauker,auer:25tirexclassification,units,gpt2TS}.  Among classification oriented TSFMs, models such as MOMENT \citep{moment} popularize the embedding-classifier paradigm and have become widely used backbones, but a truly train-free zero-shot classification foundation model remains largely underexplored.

\textbf{In-context learning (ICL)}, popularized by large language models, refers to task adaptation at inference time by conditioning on a small set of input–output exemplars, without any parameter updates  \citep{ICLSurvey}.  Prevailing mechanistic accounts of ICL frame it either as implicit Bayesian inference over latent task structure \citep{implicitBayesian,Bayesian2,Bayesian3} or as optimization-like dynamics implemented within the transformer forward pass  \citep{ICLPCA,ahnICLGD,transformerslearnincontextgradient,ICLGDOptimal,xie2025incontextlearning}.  In contrast to the rich LLM literature, the in-context capability of TSFMs has been far less explored, and in forecasting the notion of “context” is often conflated with the historical lookback window rather than exemplar-based conditioning \citep{timesnet,auer:25tirex,lu2025incontext,faw2025incontext}. Meanwhile, time series classification naturally matches the ICL paradigm: context examples can be formed as sequence–label pairs, enabling train-free prediction of query labels conditioned on a labeled context set in a single forward pass.  This sequence–label formulation also mirrors recent  ICL approaches for tabular foundation models \citep{tabpfn,tabicl}, reinforcing an emerging unification trend across foundation models for structured data \citep{pfn,tabpfnTS,cauker}. Concurrently, \citet{tokic2025tsfm} and \citet{yeh2025tict} have also started to investigate in-context learning for time series classification. \citet{tokic2025tsfm} focus on an application-driven setting, while \citet{yeh2025tict} emphasizes a synthetic data pre-training method. In contrast, we argue that the dominant “frozen encoder + trained classifier” evaluation protocol is not truly zero-shot, and we instead formalize train-free inference for classification TSFMs.

\section{Preliminary}
\label{sec:setting_motivation}

\textbf{Time series classification tasks.}
A time series classification task is a supervised problem defined by a labeled dataset
\[
\mathcal{D}=\{(x_i,y_i)\}_{i=1}^{n},\qquad x_i\in\mathbb{R}^{d\times t},\; y_i\in\{1,\dots,K\},
\]
where $n$ denotes the total number of labeled instances, $d$ is the number of channels per instance ($d=1$ for univariate series), and $t$ is the sequence length (number of time steps). The dataset is split into a training set $\mathcal{D}_{\mathrm{tr}}$ and a test set $\mathcal{D}_{\mathrm{te}}$.

\textbf{Standard zero-shot pipeline for TSFM classification.}
Let $F_\psi$ denote a pretrained TSFM used as a frozen feature encoder,
\[
F_\psi:\mathbb{R}^{d\times t}\rightarrow \mathbb{R}^{q},\qquad z_i = F_\psi(x_i).
\]
The evaluation protocol \citep{moment, mantis, nutime} for zero-shot TSFM-based classification is feature extraction:
given a dataset $\mathcal{D}_\tau$ with training and test splits
$\mathcal{D}_{\mathrm{tr}}=\{(x_i,y_i)\}_{i=1}^{N_{\mathrm{tr}}}$ and
$\mathcal{D}_{\mathrm{te}}=\{(x_j,\cdot)\}_{j=1}^{N_{\mathrm{te}}}$,
we compute embeddings $z_i = F_\psi(x_i)\in\mathbb{R}^{q}$ for all training samples and train a task-specific classifier $h_\tau:\mathbb{R}^{q}\rightarrow \mathbb{R}^{K}$ on the embedded training set $\{(z_i,y_i)\}_{i=1}^{N_{\mathrm{tr}}}$, and report predictions on test embeddings:
\[
\hat{y}(x)=h_\tau(F_\psi(x)),\qquad (x,\cdot)\in\mathcal{D}_{\mathrm{te}}.
\]

This pipeline is widely adopted in the literature on TSFMs for classification: first, the encoder $F$ is frozen and used to map a fixed length time series to a representation; then, a task-specific classifier $h$ is trained on the resulting embeddings (e.g., a random forest (RF) or a support vector machine (SVM)). We refer to this evaluation protocol as the {``freeze backbone and task-specific classifier''} pipeline throughout the paper.

For clarity, the few-shot pipeline commonly used with TSFMs, both for forecasting and classification, fine-tunes the model on a small labeled set (few-shot set) before testing \citep{moment,lagllama}. This adaptation setting is orthogonal to our zero-shot pipeline.

\textbf{In-context learning for classification.}
In contrast, ICL treats the labeled training set as context and predicts test labels in a single forward pass, without any parameter updates. Formally, it learns a model that  outputs $~p(y_{\mathrm{te}} \mid x_{\mathrm{te}}, \mathcal{D}_{\mathrm{tr}}$), namely the class probabilities for a test sample conditioned on the training set, and it can generate predictions for the test set in the same forward pass. 

In this work, we focus on a stricter train-free setting. For each dataset (i.e., task), we treat the labeled training split as a {context set} and the test split as a {query set}, and aim to predict query labels {without fitting any task-specific parameters}, i.e., using only forward computation. We realize this goal via in-context inference, detailed in Section \ref{TIC-FM}.

\section{TIC-FM Methodology}
\label{TIC-FM}

\subsection{Embedding-to-ICL Framework}
\label{Framework}
Following the train-free ICL setting in Section~\ref{sec:setting_motivation}, TIC-FM performs a single forward pass over the concatenated context and query sets, producing predictions for all query instances in parallel. A schematic overview of the architecture is presented in Figure~\ref{fig:ticfm_overview}.  Specifically, TIC-FM comprises three components with a clear information-flow constraint. First, a time series encoder $F_{\psi}$ summarizes the within-sequence temporal structure into an instance embedding that supports cross-instance comparison. Next, a projection adapter $g_{\phi}$ maps these embeddings into the token space expected by the in-context Transformer. Finally, an in-context classifier $G_{\theta}$ performs inference by attending to the labeled context tokens and the unlabeled query tokens while enforcing a strict separation between the two sets, thereby enabling parallel prediction for all query samples.


\subsection{Time Series Feature Encoder}
\label{TimeSerisesEncoder}

We use a time series encoder $F_{\psi}$ to map each input series $x\in\mathbb{R}^{T}$ to an embedding
$z = F_{\psi}(x)\in\mathbb{R}^{q}$.
The encoder first employs a token generator unit to transform the raw series into a sequence of patch tokens that capture local temporal dynamics and patch-level statistics, and then applies a ViT to aggregate information across the token sequence. The final embedding $z$ is obtained from a designated pooling token (i.e., a classification token), which attends to all patch tokens to summarize their information and thus yields a semantically rich embedding of $x$.

\textbf{Token Generator Unit.} Drawing inspiration from Mantis~\citep{mantis}, the token generator constructs a sequence of patch embeddings
$U \in \mathbb{R}^{P \times q}$ that integrates both local dynamics and patch-level statistical properties. We combine local dynamics with patch-level statistics to improve invariance to shifts and to preserve discriminative cues when the labeled context is limited. Specifically, we partition the input time series $x \in \mathbb{R}^{T}$ into $P$ non-overlapping patches of length
$w = T/P$ (assuming $T$ is divisible by $P$). Let $x^{(p)} \in \mathbb{R}^{w}$ denote the $p$-th patch for
$p=1,\ldots,P$. We first compute the mean and standard deviation for each patch to capture patch-level statistical properties:
\begin{equation*}
\mu_p = \mathrm{mean}(x^{(p)}) \in \mathbb{R}, \qquad
\sigma_p = \mathrm{std}(x^{(p)}) \in \mathbb{R}.
\end{equation*}
These statistics are encoded via a learnable statistic encoder $\phi_{\mathrm{stat}}(\cdot)$:
\begin{equation*}
s_p = \phi_{\mathrm{stat}}([\mu_p;\sigma_p]) \in \mathbb{R}^{d_s},
\end{equation*}
where $[\cdot;\cdot]$ denotes concatenation and $d_s$ is the statistic-embedding dimension. To extract short-term temporal patterns, we construct two complementary views of the input: the raw series $x$
and its first-order difference $\Delta x$ (zero-padded to length $T$). We employ a shared local feature extractor
$\phi_{\mathrm{loc}}(\cdot)$ to obtain patch-aligned dynamic representations:
\begin{equation*}
\begin{aligned}
V^{\mathrm{raw}}  &= \phi_{\mathrm{loc}}(x) \in \mathbb{R}^{P \times d_\ell}, \\
V^{\mathrm{diff}} &= \phi_{\mathrm{loc}}(\Delta x) \in \mathbb{R}^{P \times d_\ell},
\end{aligned}
\end{equation*}
where $d_\ell$ denotes the dynamic feature dimension. In our implementation, $\phi_{\mathrm{loc}}$ is instantiated
as a stack of convolutional layers followed by patch-wise pooling, ensuring alignment with the patch segmentation. Finally, we fuse the dynamic and statistical features via concatenation and a learnable projection
$\phi_{\mathrm{tok}}$:
\begin{equation*}
\begin{aligned}
u_p &= \phi_{\mathrm{tok}}\!\left(\left[\,V^{\mathrm{diff}}_p;\,V^{\mathrm{raw}}_p;\,s_p\,\right]\right)
\in \mathbb{R}^{q}, \\
U &= [u_1,\ldots,u_P] \in \mathbb{R}^{P \times q},
\end{aligned}
\end{equation*}
where $V^{\mathrm{diff}}_p$ and $V^{\mathrm{raw}}_p$ denote the $p$-th rows of their respective matrices.
In this unit, patch tokens are constructed as the basic evidence units for context-based matching. We combine local dynamics  with patch-level statistics  to improve invariance to shifts and to preserve discriminative cues under limited labeled context.

\textbf{ViT Unit.} We adopt a ViT Unit \citep{VIT,mantis}  to summarize the patch tokens because self-attention implements content-adaptive pooling, allowing the model to emphasize informative temporal segments that are most relevant for classification. This yields a compact and discriminative instance representation that is well suited for downstream in-context classification. Given the patch embedding sequence $U=[u_1,\ldots,u_P]\in\mathbb{R}^{P\times q}$ produced by the token generator above,  the ViT unit is used to aggregate information across patches and obtain a global representation.
Specifically, we introduce a learnable classification token $u_{\mathrm{cls}}\in\mathbb{R}^{q}$ and prepend it to
the patch tokens:
\begin{equation*}
\hat{U}^{(0)} = [u_{\mathrm{cls}}; U]\in\mathbb{R}^{(P+1)\times q}.
\end{equation*}
We then add positional encodings $E_{\mathrm{pos}}\in\mathbb{R}^{(P+1)\times q}$ to inject patch order information,
\(
\tilde{U}^{(0)} = \hat{U}^{(0)} + E_{\mathrm{pos}},
\)
and feed the resulting sequence into an $L$-layer Transformer:
\begin{equation*}
\tilde{U}^{(L)} = \mathrm{Transformer}_{\psi}\!\left(\tilde{U}^{(0)}\right)\in\mathbb{R}^{(P+1)\times q}.
\end{equation*}
Finally, we take the output corresponding to the classification token as the instance embedding:
\(
z = \tilde{U}^{(L)}_{0}\in\mathbb{R}^{q}.
\)
In our implementation, the Transformer uses multi-head self-attention with $h$ heads and an MLP feed-forward block at
each layer, matching the standard ViT design.

\begin{figure}[t]
  \centering
  \includegraphics[width=0.98\linewidth]{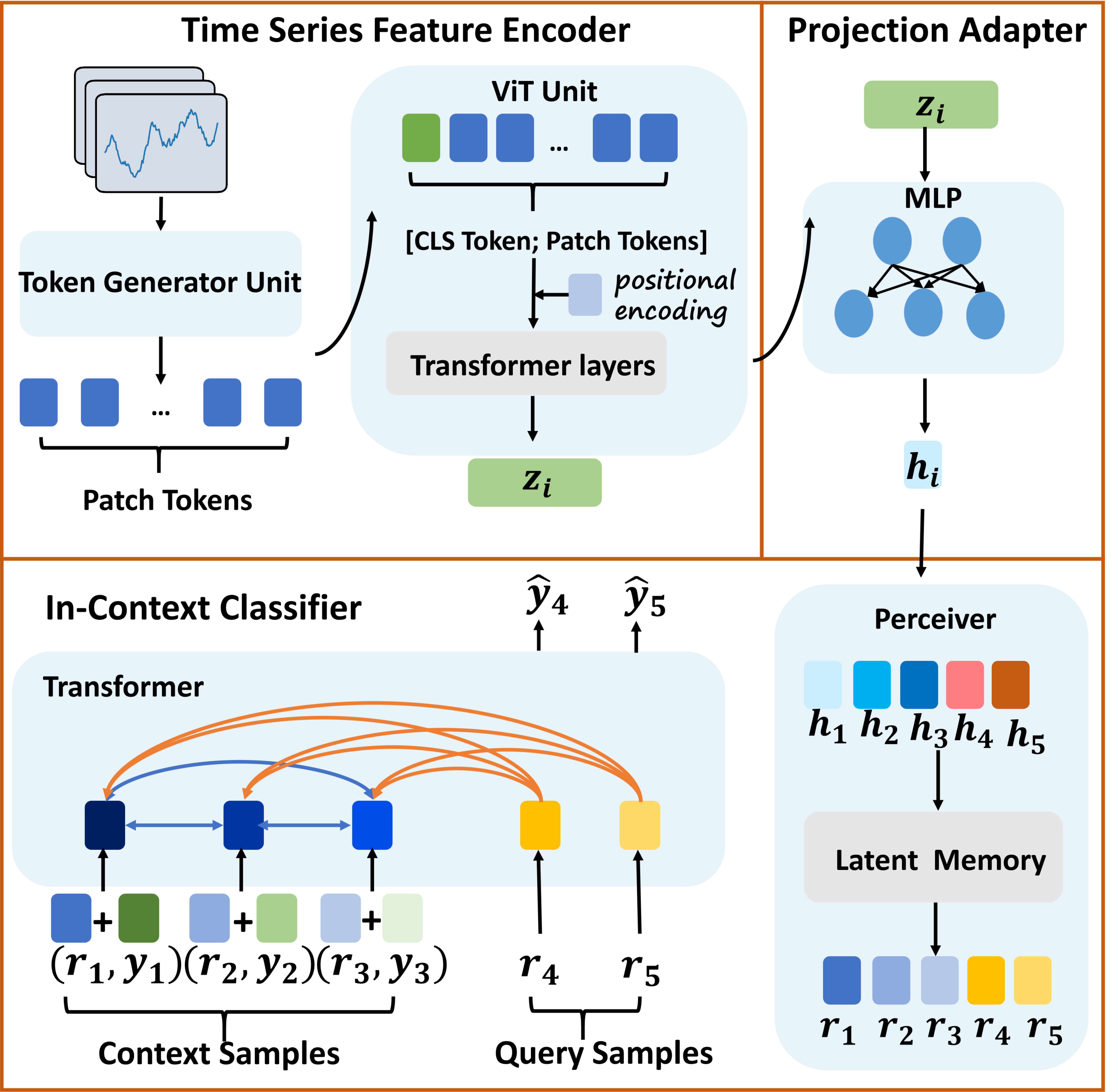}
  \caption{An overview of TIC-FM architecture.  Each time series is first encoded by a ViT-based feature encoder into an instance embedding, and then mapped by a lightweight projection adapter to the token space of the in-context classifier. The classifier processes all context and query samples jointly: it consolidates long contexts via perceiver latent memory, injects label embeddings only into the context slice, and performs split-masked Transformer reasoning.}
  \label{fig:ticfm_overview}
\end{figure}

\subsection{Projection Adapter}
\label{sec:proj}
The encoder embedding space $\mathbb{R}^{q}$ is not necessarily aligned with the token
space expected by the in-context classifier. Since in-context inference is driven by attention weights and token-wise interactions, mismatched feature statistics can degrade conditioning on labeled context. We therefore learn a lightweight projection
adapter $g_{\phi}$ to map instance embeddings into an ICL-compatible representation space. We implement $g_{\phi}$ as an MLP with Layer Normalization \citep{LN}:
\begin{equation*}
\begin{aligned}
g_{\phi}(z) = W_2\,\sigma\!\bigl(W_1\,\mathrm{LN}(z)\bigr), 
W_1 \in \mathbb{R}^{q\times d_h}, 
W_2 \in \mathbb{R}^{d_h\times d},
\end{aligned}
\end{equation*}

where $\sigma$ is GELU \cite{GELU}. The adapter is applied independently to
each instance embedding, enabling efficient batch processing of dataset sequences.
In our implementation, $d$ matches the internal model dimension of the ICL Transformer.

\subsection{In-Context Classifier}
\label{sec:icl}

We denote the projected feature embeddings from the upstream encoder as $H_{\mathrm{tr}} \in \mathbb{R}^{N_{\mathrm{tr}} \times d}$ and $H_{\mathrm{te}} \in \mathbb{R}^{N_{\mathrm{te}} \times d}$.
The In-Context Classifier, denoted as $G_\theta$, processes the concatenated sequence $H^{(0)} = [H_{\mathrm{tr}}; H_{\mathrm{te}}] \in \mathbb{R}^{N \times d}$, where $N = N_{\mathrm{tr}} + N_{\mathrm{te}}$. The computation consists of three stages: latent memory consolidation,
supervision injection, and split-masked in-context reasoning.

\textbf{Latent Context Consolidation.}
To summarize long contexts efficiently, we incorporate a Perceiver-style latent
memory mechanism ~\cite{orion} before injecting labels.
The mechanism maintains learnable latent queries $L \in \mathbb{R}^{M \times d}$ with $M \ll N_{\mathrm{tr}}$,
and applies a stack of cross-attention and feed-forward blocks.
Concretely, it first writes information from the context tokens into the latents and then reads the updated latents back to refine all tokens:
\begin{equation*}
\tilde L = \mathrm{Write}\!\left(L,\, H_{\mathrm{tr}}\right),
\qquad
H^{(0)} \leftarrow \mathrm{Read}\!\left(H^{(0)},\, \tilde L\right).
\end{equation*}
Crucially, the write operation only accesses context features without labels, and the subsequent read refines both context and query representations via the latent summary, thereby remaining label-leak safe.

\textbf{Task-Aware Label Injection.}
We encode class labels using a one-hot projection
$E_y:\{0,\ldots,C_{\max}-1\}\rightarrow\mathbb{R}^{d}$ implemented by a linear layer over one-hot inputs.
Labels are injected additively {only} into the context slice:
\begin{equation*}
H^{(0)}_{1:N_{\mathrm{tr}}} \leftarrow H^{(0)}_{1:N_{\mathrm{tr}}} + E_y\!\left(y_{\mathrm{tr}}\right),
\qquad
H^{(0)}_{N_{\mathrm{tr}}+1:N} \ \text{unchanged},
\end{equation*}
so that ground-truth labels act as in-context prompts while query tokens remain unlabeled.

\textbf{Split-masked In-Context Reasoning.}
The label-injected sequence is processed by an $L$-layer Transformer encoder. To enforce a strict separation between context and queries, we apply a split attention mask $\mathcal{M}(N_{\mathrm{tr}})$ defined by the boundary between the context and the query tokens. Under this mask, context tokens attend only within the context set, and each query token attends only to the context tokens. This masking prevents information leakage between query tokens. The encoder outputs are then normalized and decoded
into class logits:
\begin{equation*}
\begin{aligned}
H^{(L)} &= \mathrm{Transformer}\!\left(H^{(0)};\, \mathcal{M}(N_{\mathrm{tr}})\right), \\
\ell &= D_\omega\!\big(\mathrm{LN}(H^{(L)})\big),
\end{aligned}
\end{equation*}
where $D_\omega$ is a token-wise two-layer MLP producing $C_{\max}$ logits. For prediction, the decoder produces per-token logits $\ell \in \mathbb{R}^{N\times C_{\max}}$.
Let $\mathcal{S}_\mathrm{act} \subseteq \{0,\dots,C_{\max}-1\}$ be the index set of the $K$ classes appearing in the context labels
(after task-wise re-indexing). We take the query slice and restrict it to the active classes:
\[
\ell_{\mathrm{te}} := \ell_{N_{\mathrm{tr}}+1:N,\ \mathcal{S}_\mathrm{act}} \in \mathbb{R}^{N_{\mathrm{te}}\times K},\qquad |\mathcal{S}_\mathrm{act}|=K,
\]
and apply a temperature-scaled softmax $\hat{p}(y\mid x)=\mathrm{Softmax}_{\tau}\!\left(\ell_{\mathrm{te}}/\tau\right)$.

\subsection{Pretraining and Inference}For pretraining, we adopt a stage-wise training strategy tailored to each component’s role. We first use Cauker~\cite{cauker} to generate 100K synthetic time series samples and train the feature encoder $F_{\psi}$ for 100 epochs with a contrastive objective. We then pretrain the in-context classifier $G_{\theta}$ on synthetic data drawn from a structural causal model prior~\cite{orion}. Finally, using the UCR training splits, we freeze $F_{\psi}$ and $G_{\theta}$ and train only the projection adapter $g_{\phi}$ for 5 epochs with cross-entropy. No information from UCR test splits is used at any stage. 
Alternatively, $g_{\phi}$ can also be trained purely on SCM data, yielding a fully synthetic pre-trained TIC-FM, further details are provided in Appendix~\ref{appendixPretrain}.
For inference, we apply an ensembling strategy to improve robustness. Given a context set $\mathcal{D}_{\mathrm{tr}}$ and a query set $\mathcal{D}_{\mathrm{te}}$ with $K$ classes, we form
$X_{\mathrm{all}}=[X_{\mathrm{tr}};X_{\mathrm{te}}]$ and evaluate $M$ ensemble members.
Each member is defined by  a cyclic label permutation applied {only} to the context labels, $\pi_{o_m}(y)=(y+o_m)\bmod K$.
We run $G_{\theta}$ to obtain query logits $\ell_m(\cdot)$ under $o_m$, map them back to the original label space via
$\pi_{o_m}^{-1}$, and aggregate across members:
\begin{equation*}
\hat{p}(y\mid x)=\mathrm{Softmax}_{\tau}\!\left(\frac{1}{M}\sum_{m=1}^{M}\pi_{o_m}^{-1}\!\left(\ell_m(x; o_m)\right)\right).
\end{equation*} Additionally, when $K$ exceeds $C_{\max}$, we adopt a hierarchical class-extension strategy~\cite{tabicl} by constructing a balanced class-partition tree that recursively decomposes the $K$-way task into subproblems, each involving at most $C_{\max}$ classes, and combines group and within-group predictions via the law of total probability. Further inference details are provided in Appendix~\ref{app:inference}.

\section{Why In-Context Inference Works}
\label{sec:theory}

\subsection{ICL Contains Trained Classifiers}
\label{sec:icl_contains_head}

\begin{assumption}[Bounded input domain and symmetry]
\label{ass:prompt_domain_sym}
We fix maximum sizes $n_{\mathrm{tr}}\le N$ and $n_{\mathrm{te}}\le M$ and pad context--query input accordingly.
We assume embeddings lie in a compact set and consider the resulting compact padded input domain $\mathcal{X}$.
The induced  score prediction map is permutation-invariant with respect to the order of training examples and equivariant in test examples.

\end{assumption}

\begin{proposition}[ICL subsumes the trained-classifier zero-shot pipeline]
\label{prop:icl_contains_head}
Let $\mathsf{Train}$ be any classifier-training procedure that maps a training set to a classifier $h_\tau:\mathbb{R}^q\to\mathbb{R}^K$.
Let the corresponding  \emph{score} map be
$
F(\mathcal{D}_{\mathrm{tr}},\{z_j^{\mathrm{te}}\}_{j=1}^{n_{\mathrm{te}}})
\triangleq
\{h_\tau(z_j^{\mathrm{te}})\}_{j=1}^{n_{\mathrm{te}}}
\in(\mathbb{R}^K)^{n_{\mathrm{te}}}
$
with $h_\tau=\mathsf{Train}(\mathcal{D}_{\mathrm{tr}})$.
Assume $f$ satisfies Assumption~\ref{ass:prompt_domain_sym} and is continuous on $\mathcal{X}$.

Then for any $\varepsilon>0$, there exists an in-context classifier $G_\theta$ operating on the prompt tokens
$
\{r(z_i^{\mathrm{tr}},y_i^{\mathrm{tr}})\}_{i=1}^{n_{\mathrm{tr}}}
\cup
\{r(z_j^{\mathrm{te}},\bot)\}_{j=1}^{n_{\mathrm{te}}}
$
such that, on $\mathcal{X}$,
\[
\sup\ \max_{1\le j\le n_{\mathrm{te}}}\ \big\|G_\theta(\cdot)_j - h_\tau(z_j^{\mathrm{te}})\big\|_\infty
\le \varepsilon.
\]
\end{proposition}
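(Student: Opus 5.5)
The proof is a universal-approximation argument for permutation-equivariant set functions, specialised to the split-masked architecture of Section~\ref{sec:icl}. By Assumption~\ref{ass:prompt_domain_sym} and the continuity hypothesis, the score map $F$ (the "$f$" of the statement) is continuous on the compact padded domain $\mathcal{X}$. It is invariant to permutations of the context tokens and equivariant in the query tokens. Under the split mask, each query output can only depend on its own token and the multiset of context tokens. I therefore first reduce $F$ to a single continuous map
\[
\Phi:\big(\{r(z_i^{\mathrm{tr}},y_i^{\mathrm{tr}})\}_{i\le n_{\mathrm{tr}}},\, z^{\mathrm{te}}\big)\mapsto h_\tau(z^{\mathrm{te}}),
\]
applied query by query, with $F(\cdot)_j=\Phi(\mathcal{D}_{\mathrm{tr}},z^{\mathrm{te}}_j)$. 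This uses the fact that $h_\tau$ acts pointwise on test embeddings. The map $\Phi$ is symmetric in its first argument and is continuous on a compact set, since it is a restriction of $F$ (fix the other queries to padding). Because $\mathcal{X}$ is compact, $\Phi$ is uniformly continuous there.

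Next I represent the set-dependence of $\Phi$ through finitely many symmetric moments. The label-injected token $r(z,y)=g(z)+E_y(y)$ is injective on the finite label set, so the multiset of context tokens determines $\mathcal{D}_{\mathrm{tr}}$. Padding is handled by a flag coordinate. By the Deep Sets / Zaheer--Wagstaff representation, with a Stone--Weierstrass density argument for the compact case, $\Phi$ is uniformly approximable to within $\varepsilon/2$ by a map of the form
\[
\rho\Big(\tfrac{1}{n_{\mathrm{tr}}}\textstyle\sum_i \varphi(r_i),\, n_{\mathrm{tr}},\, z^{\mathrm{te}}\Big),
\]
where $\varphi$ collects polynomial moment features and $\rho$ is continuous. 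Using means weighted by the count $n_{\mathrm{tr}}\le N$ keeps everything bounded.

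Then I implement this map in $G_\theta$. One feed-forward layer applied token-wise approximates $\varphi$ on the context tokens and writes it into reserved coordinates; by standard MLP universal approximation the error is uniform on compacts. One attention layer follows, in which each query attends only to context tokens, as the mask allows. Setting the query--key logits to a constant gives uniform attention weights $1/n_{\mathrm{tr}}$ over the non-padding context tokens; padding is masked out via the flag and a large negative bias. The value projection copies the $\varphi$ coordinates, so each query token receives the exact empirical mean. A final token-wise MLP, taken to be the decoder $D_\omega$ (after an invertible affine step absorbing the LayerNorm), approximates $\rho$ uniformly on the compact image of the previous stage, producing $K$ outputs. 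The latent memory block is set to the identity via zero residual branches.

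\textbf{Main obstacle.} The difficulty is the error bookkeeping across composed approximations. The error from approximating $\varphi$ propagates through the exact averaging, which is a 1-Lipschitz map, and then into the approximant of $\rho$. I control this by choosing the $\rho$-approximant first, on a slightly enlarged compact set, so that it is uniformly continuous with a known modulus. Then I choose the $\varphi$-approximant fine enough that its perturbation stays inside that modulus, which gives total error at most $\varepsilon$ uniformly over $\mathcal{X}$ and all $j\le n_{\mathrm{te}}$. A secondary issue is the role of LayerNorm: it is not Lipschitz-trivial near zero variance. I avoid this by adding a constant offset coordinate so that inputs to LayerNorm have variance bounded away from zero, where LayerNorm is a continuous invertible reparametrization.
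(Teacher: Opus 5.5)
Your proposal is correct and follows essentially the paper's route: Stone--Weierstrass density of polynomials symmetric in the context tokens, reduction to a DeepSets form via power-sum features, a token-wise MLP for $\varphi$, attention pooling, and a token-wise MLP for $\rho$, combined by the triangle inequality; it also refines two points, since you pool directly at each query under the split mask rather than through the paper's extra summary token and broadcast layer, and your reduction to a single per-query map $\Phi$ together with the explicit composition-error bookkeeping justifies both the shared token-wise readout and the $\varepsilon/2$ split that the paper only asserts. Two peripheral details need fixing: first, if pads are masked out of the pooling head, $n_{\mathrm{tr}}$ must still be delivered to $\rho$ (for example by a second head averaging a non-pad indicator over all $N$ context slots, which is what the paper's ``mask-computable count'' tacitly assumes); second, LayerNorm is not invertible even when the variance is bounded away from zero, because it discards each token's mean and scale, so you need two constant coordinates with distinct values, not a single offset, to make it injective on your compact set.
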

In particular, this implies the following label-level guarantee under a uniform margin condition.

\textbf{Proof sketch.}
View the trained-classifier pipeline as a continuous map that takes a set of labeled training tokens and a set of test tokens and returns test-time scores. Because the map is invariant to permutations of the training set, we can uniformly approximate each output coordinate by a polynomial that is symmetric in the training tokens. Any symmetric polynomial in the training tokens can be rewritten as a function of finitely many aggregated features of the form $\sum_i \phi(u_i)$ \citep{deepsets}. A transformer-style in-context model can implement this computation by pooling token-wise features from training tokens into a summary representation using attention, and broadcasting the summary to each test token and applying a token-wise MLP. Combining these steps yields an in-context classifier whose test scores uniformly approximate those produced by the trained classifier. More details are available in \cref{app:proof_thm1_integer}

\textbf{Discussion.}
Proposition~\ref{prop:icl_contains_head} states that  ICL is at least as expressive as the standard
``freeze backbone and task-specific classifier'' evaluation protocol.
Any continuous trained-classifier pipeline induces a symmetric dataset-to-scores map on a bounded prompt domain, and such maps
can be uniformly approximated by an in-context model that processes the training set as context and the test set as queries,
producing all test predictions in a single forward pass.

\subsection{TIC-FM Emulates Gradient-Based Classifier Training}
\label{sec:ticfm_gd}

We provide a mechanistic justification for why a train-free in-context classifier can still behave like a
learning algorithm: the forward pass can emulate gradient-style updates in its activations.
This viewpoint is well-established in recent analyses of in-context learning in (linear) attention models \citep{transformerslearnincontextgradient,zhang2024incontext,xie2025incontextlearning}. We stress that this correspondence relies on idealized components (linear attention), the relationship should be read as an
approximate analogy rather than a strict equivalence.

\begin{proposition}[In-context emulation of gradient descent]
\label{prop:ticfm_gd}
Consider the scalar linear classifier trained on embeddings by one step of gradient descent on the squared loss
\[
\ell(W)=\frac{1}{2n_{\mathrm{tr}}}\sum_{i=1}^{n_{\mathrm{tr}}}\big(W z_i^{\mathrm{tr}}-y_i^{\mathrm{tr}}\big)^2,
\qquad W\in\mathbb{R}^{1\times q},
\]
with initialization $W^{(0)}=0$ and step size $\eta$.
Then there exists a TIC-FM instance whose in-context module $G_\theta$ is a linear
attention block such that, for every test token $r(z_j^{\mathrm{te}},\bot)$, the output scalar equals the $1$-step GD prediction:
\[
\big(G_\theta(\cdot)\big)^{(y)}_j \;=\; W^{(1)} z_j^{\mathrm{te}},\qquad \forall j=1,\dots,n_{\mathrm{te}}.
\]
\end{proposition}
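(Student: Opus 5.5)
We give an explicit linear-attention construction in the style of \citet{transformerslearnincontextgradient}; no approximation argument is needed. First we compute the target. With $W^{(0)}=0$ the gradient is
\[
\nabla\ell(0)=-\frac{1}{n_{\mathrm{tr}}}\sum_i y_i^{\mathrm{tr}}(z_i^{\mathrm{tr}})^\top ,
\]
so
\[
W^{(1)}=\frac{\eta}{n_{\mathrm{tr}}}\sum_{i=1}^{n_{\mathrm{tr}}} y_i^{\mathrm{tr}}(z_i^{\mathrm{tr}})^\top
\qquad\text{and}\qquad
W^{(1)}z_j^{\mathrm{te}}=\frac{\eta}{n_{\mathrm{tr}}}\sum_i y_i^{\mathrm{tr}}\,\langle z_i^{\mathrm{tr}},z_j^{\mathrm{te}}\rangle .
\]
The target is therefore a bilinear form in (label, key–query inner product), summed over context tokens. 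This is exactly what one unnormalized linear attention head computes when queries may attend only to context tokens. That is precisely the split mask $\mathcal{M}(N_{\mathrm{tr}})$ of Section~\ref{sec:icl}, so queries never attend to each other.

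We then fix the token embedding $r$ to match the label-injection mechanism. Take $d=q+1$, and let the projection adapter be the identity, or any map that embeds $z$ into the first $q$ coordinates. Set $r(z,y)=(z,\,y)$ and $r(z,\bot)=(z,\,0)$. This is consistent with additive label injection if $E_y$ writes the scalar label into the last coordinate and query tokens are left unchanged. Denote the last coordinate by the superscript $(y)$, as in the statement.

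Next we choose a single linear-attention layer with residual connection, $h_j\leftarrow h_j+\sum_{i\le n_{\mathrm{tr}}} (W_V h_i)(h_i^\top W_K^\top W_Q h_j)$. Take $W_Q=W_K=\begin{pmatrix} I_q & 0\\ 0 & 0\end{pmatrix}$, so the score between context token $i$ and query $j$ is $\langle z_i^{\mathrm{tr}},z_j^{\mathrm{te}}\rangle$. Take $W_V=\frac{\eta}{n_{\mathrm{tr}}}e_{q+1}e_{q+1}^\top$, so the value of context token $i$ is $\frac{\eta}{n_{\mathrm{tr}}}y_i^{\mathrm{tr}}e_{q+1}$. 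The $(y)$-coordinate of query token $j$ then starts at $0$ and is incremented by exactly $\frac{\eta}{n_{\mathrm{tr}}}\sum_i y_i^{\mathrm{tr}}\langle z_i^{\mathrm{tr}},z_j^{\mathrm{te}}\rangle=W^{(1)}z_j^{\mathrm{te}}$. The $1/n_{\mathrm{tr}}$ factor is a fixed constant once we fix the padded size. Alternatively it can be absorbed by averaging over context tokens; in either case padded tokens carry zero features and contribute nothing.

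The remaining TIC-FM components must not disturb this identity; this is the main obstacle. We neutralize them as follows.
\begin{itemize}
\item The Perceiver write/read block is set to the identity, with zero output projections and residual connections only.
\item The context-token updates are irrelevant, since queries read only context keys and values in the same layer.
\item The final LayerNorm and decoder $D_\omega$ are chosen, or idealized away, so that the readout is the $(y)$-coordinate. For example, drop LN in the idealized linear model, as the paper's remark on ``idealized components'' permits, and let $D_\omega$ be a coordinate projection; an identity-on-a-line MLP also works since GELU can be avoided by using zero first-layer bias and a linear map pair.
\end{itemize}
These simplifications require noting that the proposition asserts existence of \emph{some} instance, so such degenerate parameter choices are allowed.
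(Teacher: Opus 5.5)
Your construction is correct and follows the paper's route: you write one GD step from $W^{(0)}=0$ in prediction space, then use a single linear-attention block whose $QK^\top$ computes $\langle z_i^{\mathrm{tr}},z_j^{\mathrm{te}}\rangle$ and whose values carry the initial residuals $-e_i^{(0)}=y_i^{\mathrm{tr}}$ into each query's label slot. You are more explicit than the paper, which defers the weights to von Oswald et al.\ and covers the Perceiver, LayerNorm and decoder only through its idealization caveat, whereas you write down $W_Q,W_K,W_V$ and neutralize those components directly.
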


\textbf{Proof sketch.}
Write GD updates in prediction space:
each step updates all query predictions by a linear combination of training residuals weighted by inner products
$\langle z_i^{\mathrm{tr}}, z_j^{\mathrm{te}}\rangle$.
A linear attention block computes these inner products via $QK^\top$ and aggregates residuals through its values,
thereby performing exactly one GD-style update on the label slot.
A full derivation is deferred to Appendix~\ref{app:proof_thm2}.

\textbf{Discussion.}
Proposition~\ref{prop:ticfm_gd} clarifies that ``train-free'' inference does not imply ``no learning'':
the learning dynamics of classifier training can be amortized into the forward computation of $G_\theta$.
While our TIC-FM is not restricted to linear regression, this toy equivalence provides a concrete explanation
for why conditioning on the labeled training set can produce optimization-like behavior at inference time.

\section{Experiments}
\label{sec:exp}

\subsection{Experimental setup}
\textbf{Datasets.} The UCR Time Series Archive \cite{dau2019ucr} is a widely recognized benchmark in the univariate time series classification domain. It comprises 128 datasets spanning diverse domains, such as human activity recognition, medical diagnosis, and intelligent transportation systems, making it an indispensable standard for evaluating the generalization and robustness of foundational models. In this study, unless stated otherwise, all experiments are conducted using the official UCR train/test splits.




\textbf{Compared Methods.} We compare against two recent state-of-the-art time series foundation models: Mantis \citep{mantis}  and MOMENT \citep{moment}. Mantis is an encoder-only model with 8M parameters pretrained with contrastive learning, and we use MOMENT-125M, an encoder–decoder model pretrained via masked reconstruction. Following prior evaluation protocols, we assess Mantis by training a Random Forest (RF) on frozen embeddings  and MOMENT  by training a Support Vector Machine (SVM). We report test accuracy averaged over full UCR datasets, using the standard train–test splits from \cite{dau2019ucr}. To control for classifier choice, we additionally evaluate both TSFMs using a shared set of trained classifiers, including a one-layer MLP, RF, and SVM, as well as training-free classifiers such as k-nearest neighbors (kNN) and nearest-centroid classification (NC) on the embeddings.


\textbf{Metrics.} We report classification accuracy (hereafter referred to as Acc) as the primary evaluation metric. Performance across datasets is summarized using average accuracy and average rank, to capture the overall performance trend. The higher the average accuracy, the better the performance; conversely, the smaller the average rank, the better the model. For each experiment, results are averaged over five random seeds by default. 

Further details on compared methods, implementation, and hardware specifications are provided in the Appendix ~\ref{app:experiments}.




\subsection{Comparative Analysis on Standard UCR}

In this section, we study the question: \textbf{Does our method outperform others under standard settings?}

\cref{StandardUCR} summarizes results on the UCR archive, and Appendix~\ref{app:mainsReluts} reports per-dataset scores. TIC-FM attains the highest average accuracy and the lowest mean rank among the compared methods, indicating that the gains are broadly distributed across datasets rather than concentrated in a small subset. Moreover, TIC-FM improves over the strongest frozen-backbone baseline with a trained classifier (Mantis+SVM) while requiring neither per-dataset classifier training nor hyperparameter tuning at deployment. This advantage stems from conditioning on the labeled context set, consistent with our in-context learning formulation, and suggests that improved use of labeled context drives the gains rather than task-specific classifier optimization.


\begin{table}[ht]
  \caption{\textbf{Classification Accuracy and Average Rank on full UCR Datasets.} The best results are in \textbf{bold}, and
the second best results are \underline{underlined}.} 
  \label{StandardUCR}
  \begin{center}
    \begin{small}
      \begin{sc}
        \begin{tabular*}{\columnwidth}{@{\extracolsep{\fill}}lrr}
          \toprule
           Method & Avg Acc  &  Avg Rank  \\
          \midrule
                MOMENT + RF & 77.51\% & 4.74 \\
                MOMENT + SVM & 77.98\%  & 3.95 \\
                MOMENT + MLP & 44.51\%  & 10.45 \\
                MOMENT + kNN & 75.72\%  & 5.90 \\
                MOMENT + NC & 66.06\% & 8.47 \\
                Mantis + RF & 78.67\% & 4.22 \\
                Mantis + SVM & \underline{79.06\%} & \underline{3.91} \\
                Mantis + MLP & 63.53\% & 8.58 \\
                Mantis + kNN & 77.07\%  & 5.17 \\
                Mantis + NC & 70.53\%  & 7.04 \\
                \textbf{TIC-FM} & \textbf{80.01\%} & \textbf{3.59} \\
          \bottomrule
        \end{tabular*}
      \end{sc}
    \end{small}
  \end{center}
  \vskip -0.1in
\end{table}

Additionally, \cref{StandardUCR} also highlights a limitation of the freeze backbone and task-specific classifier pipeline. With the backbone held fixed, performance varies substantially across classifiers, with SVM and RF consistently outperforming MLP. This sensitivity to the classifier suggests that reported results may be driven as much by the optimization behavior of the classifier as by the quality of the learned representations. TIC-FM mitigates this bias by introducing a context-based inference pipeline for time series classification that replaces per-dataset classifier training with inference-time conditioning on labeled support examples, thereby providing a more stable and fair evaluation of time-series foundation models on classification tasks.

\subsection{Robustness under Extreme Label Scarcity}
In this section, we study the question: \textbf{Is TIC-FM more reliable under extreme label scarcity than freeze backbone and classifier methods?}

We study an extreme low-shot regime by labeling only 10\% or 15\% of each dataset. To prevent potential leakage, we discard the official UCR training split and build the protocol solely from the official test split: we use stratified sampling to select 10\% (or 15\%) of the test set as the labeled context set, ensuring at least one labeled example per class.

As shown in Table~\ref{Low-shotdUCR}, TIC-FM attains higher average accuracy under both label fractions than the baselines. 
Under severe label scarcity, freeze backbone and classifier baselines vary substantially across classifier families: simple parametric classifiers (e.g., a one-layer MLP) may underfit or overfit, and often fail to convert additional labels into improved decision boundaries, whereas convex (SVM) or ensemble methods (RF) are typically more stable. Train-free rules such as kNN and NC are more robust because they avoid optimization, but their inductive bias is limited to distances or class means and cannot support richer task conditioning. In contrast, TIC-FM conditions on labeled samples via in-context inference, directly exploiting context–query interactions without per-dataset training, which accounts for its advantage in the low-label regime.



\begin{table}[ht]
  \caption{\textbf{Average Classification Accuracy on full UCR Datasets under Extreme Low-Shot Settings.} The best results are in \textbf{bold}, and
the second best results are \underline{underlined}. }
  \label{Low-shotdUCR}
  \begin{center}
    \begin{small}
      \begin{sc}
        \begin{tabular*}{\columnwidth}{@{\extracolsep{\fill}}lcc}
          \toprule
           Method & 10\%  & 15\%  \\
          \midrule
                MOMENT + RF  & 69.19\%  &  71.76\% \\
                MOMENT + SVM & 70.13\% &  72.88\%  \\
                MOMENT + MLP & 43.64\% &  43.84\%  \\
                MOMENT + kNN & 68.93\% &  71.99\%   \\
                MOMENT + NC  &  63.41\% &  64.85\%   \\
                Mantis + RF  & 69.57\% & 72.14\%     \\
                Mantis + SVM &  \underline{71.17\%} &  \underline{72.91\%}     \\
                Mantis + MLP &  56.77\% &   59.67\%      \\
                Mantis + kNN &  70.62\%&  72.80\%      \\
                Mantis + NC  &   66.07\% &   67.45\%     \\
            \textbf{TIC-FM} & \textbf{72.30\%} & \textbf{74.59\%}    \\
          \bottomrule
        \end{tabular*}
      \end{sc}
    \end{small}
  \end{center}
  \vskip -0.1in
\end{table}

\subsection{Scalability with respect to Supervision Budgets} 
\label{exp：Train-fraction}

In this section, we study the question: \textbf{How performance changes as more labeled examples become available, and whether TIC-FM remains competitive without any parameter updates.}

Since none of the models are pretrained on the UCR test split, we build this protocol solely from the official test split.
For each fraction $\alpha\in\{10\%,20\%,30\%,40\%,50\%,60\%\}$, we use stratified sampling to select $\alpha$ of the test examples as labeled context, and evaluate on the remaining $(1-\alpha)$.
We compare TIC-FM against Mantis and MOMENT using the probing heads from their original papers (RF for Mantis; SVM for MOMENT), and also report swapped heads (Mantis+SVM and MOMENT+RF) to control for classifier choice (further details are provided in Appendix~\ref{app:trainFra}).

\begin{figure}[ht]
  \centering
  \includegraphics[width=0.98\linewidth]{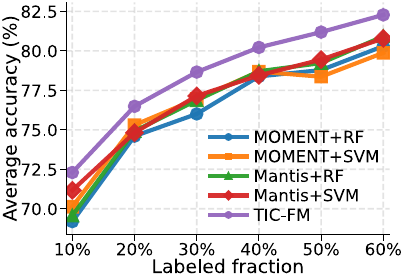}
  \caption{Scalability analysis with labeled data fractions.
  }
  \label{fig:train_fraction_scaling}
\end{figure}

As shown in \cref{fig:train_fraction_scaling}, TIC-FM attains higher average accuracy under both label fractions than the baselines. As the proportion of labeled examples increases from very limited supervision to more moderate budgets, the mean accuracy of TIC-FM improves steadily, indicating favorable scalability with respect to label availability. In contrast, freeze backbone and classifier baselines do not consistently exhibit monotonic or stable gains as supervision increases. For instance, MOMENT+SVM shows limited improvement and even noticeable fluctuations over certain labeling ranges. These results suggest that the in-context inference paradigm is more robust in leveraging additional labeled information. When labeling is relatively affordable and more labeled context examples can be provided, TIC-FM effectively absorbs the extra supervisory signal and continues to improve predictive performance. When labels are extremely scarce, TIC-FM still benefits from training-free conditioning on the available context set and strong generalization, leading to superior performance. Overall, TIC-FM performs robustly across a broad range of labeling budgets, making it a practical choice under varying annotation constraints.

\subsection{Impact of Context Window Size}
\label{sec:ctx_scaling}

In this section, we study the question: \textbf{How does TIC-FM scale with the context length when the query set is fixed?}

\begin{wrapfigure}[16]{r}{0.32\textwidth}
  \centering
    \includegraphics[width=0.335\textwidth]{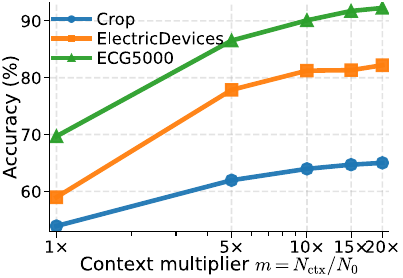}
  \caption{Impact of context length on inference accuracy. Increasing the number of context examples ($N_{ctx}$) consistently improves performance.}
  \label{fig:context_length_scaling}
\end{wrapfigure}
Unlike Section~\ref{exp：Train-fraction}, which varies the labeled fraction and evaluates on the remaining unlabeled portion, here we isolate the effect of longer contexts and test whether performance saturates. We focus on the three largest multiclass UCR datasets, Crop, ElectricDevices, and ECG5000. We first construct a fixed query set $Q$ by stratified sampling $10\%$ of $\mathcal{D}_{\mathrm{te}}$, ensuring at least one example per class, and keep $Q$ unchanged throughout. The remaining examples constitute a disjoint context pool $P=\mathcal{D}_{\mathrm{tr}} \cup (\mathcal{D}_{\mathrm{te}}\setminus Q)$. Let $C$ denote the number of classes and define the base context size as $N_0=10C$, corresponding to roughly ten labeled examples per class on average. We then vary the context budget as $N_{\mathrm{ctx}}=mN_0$ with $m\in\{1,5,10,15,20\}$, using class-balanced sampling from $P$ (additional details are provided in Appendix~\ref{app:ctx_scaling}).

As illustrated in \cref{fig:context_length_scaling}, TIC-FM exhibits a consistent monotonic improvement in accuracy as the context budget $N_{\mathrm{ctx}}$ expands from $N_0$ to $20N_0$, with the most pronounced gains occurring in the early low-data regime ($N_0 \rightarrow 5N_0$). This scaling behavior not only underscores the model's high sample efficiency but also empirically corroborates our theoretical analysis in \cref{sec:theory}; specifically, the performance trajectory mirrors that of an optimization algorithm benefiting from increased sample size, validating that TIC-FM successfully amortizes the learning dynamics into a single, data-driven forward pass without hitting early saturation.

\subsection{Ablation Study}
\label{sec:ablation}

We conduct an ablation study to quantify the contribution of individual components in TIC-FM. In each variant, we remove or disable exactly one module while keeping the backbone and evaluation protocol unchanged.

To assess the contribution of in-context inference, we replace the in-context classifier with a conventional trained classifier under the freeze backbone and task-specific classifier pipeline: we freeze the TIC-FM encoder, train an RF on embeddings extracted from the training split for each dataset, and evaluate on the corresponding test split. We denote this variant as \textbf{TIC-FM w/o ICL}.
To ablate inference-time ensembling, we remove cyclic label permutation and ensemble aggregation and instead perform a single forward pass, denoted as \textbf{TIC-FM w/o ensemble aggregation}.


\cref{fig:AblationStudy} summarizes the results. Removing the in-context learning classifier yields the largest drop, indicating that in-context inference accounts for most of TIC-FM’s gains: replacing it with a conventional classifier (RF) reduces the method to a freeze backbone and classifier pipeline and removes joint conditioning on labeled support at inference time. 
\begin{wrapfigure}[13]{r}{0.32\textwidth}
  \centering
  \clipbox{0pt 0pt 0pt 16pt}{%
     \includegraphics[width=\linewidth]{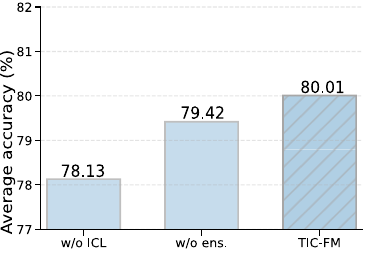}%
  }
  \caption{Ablation study on UCR. Removing the in-context classifier results in a marked decrease in performance.
  }
  \label{fig:AblationStudy}
\end{wrapfigure}
Disabling inference-time ensembling also degrades performance, but to a smaller extent, suggesting that cyclic label permutation and aggregation primarily provide additional robustness rather than being critical to the method. Overall, both components contribute: in-context inference delivers the main improvement, while ensemble aggregation consistently refines predictions by reducing sensitivity to the permutation choice, leading to the best average accuracy when combined.









\section{Conclusion}
{\looseness=-1
In this work, we revisit the commonly used ``zero-shot'' evaluation paradigm for classification with TSFMs, and argue that the widely adopted freeze backbone and task-specific classifier protocol is not truly training-free and is sensitive to the classifier. To address this limitation, we propose TIC-FM, an in-context classification framework with theoretical justification.
Empirically, TIC-FM achieves state-of-the-art performance on the full UCR archive and remains consistently stronger in extreme low-shot settings. 

More broadly, our training-free in-context formulation is particularly well-suited for scenarios where parameter updates are infeasible, such as on-device deployment with limited optimization budgets or privacy sensitive settings where data cannot be centralized.}


\section*{Impact Statement}
This paper presents work whose goal is to advance the field of machine learning. There are many potential societal consequences of our work, none of which we feel must be specifically highlighted here.

\bibliography{main}
\bibliographystyle{icml2026}

\newpage
\appendix
\onecolumn

\section{Details of TIC-FM Methodology}
\subsection{Pretraining}
\label{appendixPretrain}

We first pretrain the time series feature encoder $F_{\psi}$ (Section~\ref{TimeSerisesEncoder}) for 100 epochs on 100K synthetic time series generated by Cauker~\cite{cauker} using a contrastive objective. This pretraining encourages $F_{\psi}$ to learn discriminative representations by maximizing agreement between two stochastically augmented views of the same instance while reducing similarity across different instances.

Formally, let $\mathcal{B}=\{x_i\}_{i=1}^{B}$ denote a mini-batch of $B$ synthetic time series. For each $x_i$, we sample two augmentation operators $\mathcal{T}_1,\mathcal{T}_2 \sim \mathcal{T}$ (e.g., random cropping and resizing) to obtain two correlated views
$\tilde{x}_{i,1}=\mathcal{T}_1(x_i)$ and $\tilde{x}_{i,2}=\mathcal{T}_2(x_i)$.
The two views are encoded by $F_\psi$ and mapped to a contrastive latent space by a projection head $g_\phi(\cdot)$, yielding
\[
z_{i,1}=g_\phi\!\big(F_\psi(\tilde{x}_{i,1})\big),\qquad
z_{i,2}=g_\phi\!\big(F_\psi(\tilde{x}_{i,2})\big).
\]
In our implementation, $g_\phi$ is instantiated as a Layer Normalization followed by a linear projection. 

Following \cite{oord2018representation,he2020momentum}, we adopt a one-way in-batch InfoNCE loss, treating $z_{i,1}$ as queries and $z_{i,2}$ as keys. We first $\ell_2$-normalize the embeddings,
$\bar{z}_{i,1}=z_{i,1}/\|z_{i,1}\|_2$ and $\bar{z}_{i,2}=z_{i,2}/\|z_{i,2}\|_2$,
and compute the pairwise similarity logits
\[
s_{ij}=\frac{\bar{z}_{i,1}^{\top}\bar{z}_{j,2}}{\tau},\qquad \tau>0,
\]
forming a $B\times B$ logit matrix $S=[s_{ij}]$.
For each query index $i$, the positive key is the matched index $j=i$, and all $j\neq i$ serve as in-batch negatives. The resulting contrastive loss is the cross-entropy over the in-batch keys:
\[
\mathcal{L}_{\mathrm{con}}
=\frac{1}{B}\sum_{i=1}^{B}\left(-\log\frac{\exp(s_{ii})}{\sum_{j=1}^{B}\exp(s_{ij})}\right),
\]
equivalently $\mathcal{L}_{\mathrm{con}}=\mathrm{CE}(S,\,[1,2,\dots,B])$, where the target indices correspond to the diagonal alignment. Unless otherwise stated, we use $\tau=0.1$.

Regarding the optimization of the projection adapter $g_{\phi}$ (Section~\ref{sec:proj}) and the in-context classifier $G_\theta$ (Section~\ref{sec:icl}), specifically when the adapter is instantiated as an MLP, we employ a two-stage training protocol. In the first stage, we pretrain the in-context classifier on synthetic datasets generated from structural causal models (SCMs) for 27050 steps, following the Orion-MSP methodology~\cite{orion}. Subsequently, in the second stage, we freeze both the time series feature encoder and the pretrained in-context classifier. The projection adapter is then trained for 5 epochs on the UCR training splits using a standard cross-entropy objective.  Specifically, we adopt an episodic training paradigm: for each iteration, we construct a classification task by sampling a labeled context set and a batch of query examples from a dataset's training split. 
The model predicts query labels conditioned on the context, and the loss is computed on these predictions. 
All gradients are backpropagated exclusively to the adapter parameters. 
Crucially, no samples from the UCR test sets are accessed during this process, thereby preventing any data leakage in our experiments.


To accommodate varying feature dimensions, we introduce RowMixerLite, a Transformer-based projection adapter that treats the feature dimension as a token sequence, enabling a unified interface for downstream in-context classification across heterogeneous time-series embedding dimensionalities. Concretely, given an input representation, RowMixerLite partitions the feature axis into non-overlapping patches of size $8$ and applies a shared patch projection to map each patch into a token of dimension $d_{\text{model}}{=}128$.   The resulting patch tokens are augmented with a small set of learnable special tokens, consisting of $4$ class tokens and $2$ global tokens, each of dimension $128$. These tokens provide dedicated aggregation slots that summarize patch-level evidence and improve the stability of the subsequent pooling operation. We then apply a $3$ layer Transformer encoder over the token sequence, using $8$ attention heads, feed-forward dimension $256$, pre-norm, and dropout $0.0$. The output tokens are finally arranged into $4$ class-specific embeddings, which are concatenated to match the token dimensionality expected by the in-context classifier, yielding $d_{\text{icl}}{=}512$.

Regarding the optimization of RowMixerLite and the in-context classifier, we jointly pretrain them end-to-end for 15750 steps on SCM-based synthetic data generated by Orion-MSP, which is designed to mimic diverse input-representation distributions. To improve robustness and reduce sensitivity to feature ordering, we apply feature shuffling with probability $0.25$ during training. After pretraining, we integrate the RowMixerLite adapter with our time series encoder  $F_{\psi}$ and the in-context classifier $G_\theta$   for time series classification. This fully synthetic model supports multivariate time series classification and achieves competitive performance on UCR, reaching \textbf{79.75\%} average accuracy. These results suggest that the TIC-FM framework can be trained entirely on synthetic data, without any real datasets, while remaining competitive.

\subsection{Inference Details}
\label{app:inference}

\subsubsection{Hierarchical class extension for $K > C_{\max}$}
\label{app:hierarchy}
To address scenarios where the total number of classes $K$ exceeds the model's architectural limit $C_{\max}$ (constrained by the pre-defined label embedding dimension or context window), we implement a hierarchical class-extension strategy following \cite{tabicl}. 
Instead of simple truncation, this method dynamically constructs a classification tree $\mathcal{T}$ derived from the labeled support set $\mathcal{D}_{\mathrm{tr}}$, allowing TIC-FM to perform inference over an arbitrary number of classes without parameter updates.

\paragraph{Tree Construction (Fit Phase).}
The construction process proceeds recursively starting from the root. Let $\mathcal{D}_{\mathcal{N}} = \{(x_i, y_i)\}$ denote the subset of support samples reaching node $\mathcal{N}$, and $\mathcal{Y}_{\mathcal{N}}$ be the set of unique classes present in $\mathcal{D}_{\mathcal{N}}$. The tree is built based on the following logic:
\begin{enumerate}
    \item \textbf{Leaf Condition:} If $|\mathcal{Y}_{\mathcal{N}}| \le C_{\max}$, the node $\mathcal{N}$ is designated as a \emph{leaf}. It directly stores $\mathcal{D}_{\mathcal{N}}$ as its local in-context demonstrations for standard inference.
    \item \textbf{Internal Node Splitting:} If $|\mathcal{Y}_{\mathcal{N}}| > C_{\max}$, the node becomes an internal router. We partition the classes $\mathcal{Y}_{\mathcal{N}}$ into $G$ disjoint groups $\{\mathcal{G}_1, \dots, \mathcal{G}_G\}$, where $G = \lceil |\mathcal{Y}_{\mathcal{N}}| / C_{\max} \rceil$. The grouping strategy ensures balanced class distribution across branches.
    \item \textbf{Label Coarsening (Meta-Task Construction):} We construct a coarse-grained classification task for the internal node. The original labels $y_i$ in $\mathcal{D}_{\mathcal{N}}$ are mapped to their corresponding group indices $g(y_i) \in \{0, \dots, G-1\}$. This forms a meta-support set $\mathcal{D}'_{\mathcal{N}} = \{(x_i, g(y_i))\}$, which serves as the context for deciding which branch to traverse.
    \item \textbf{Recursion:} We instantiate $G$ child nodes, where the $j$-th child is recursively fitted using only the subset of data belonging to group $\mathcal{G}_j$.
\end{enumerate}

\paragraph{Recursive Inference (Predict Phase).}
During inference, a query sample $x_{\mathrm{te}}$ traverses the tree from the root. 
The probability of a final class $y$ is computed via the chain rule of probability along the path from the root to the leaf containing $y$.
For an internal node $\mathcal{N}$, the model behaves as a router, predicting the probability distribution over groups $P(g \mid x_{\mathrm{te}}; \mathcal{D}'_{\mathcal{N}})$ using the meta-support set.
Algorithm~\ref{alg:hierarchical_predict} formalizes this recursive probability aggregation.

\begin{algorithm}[h]
   \caption{Hierarchical In-Context Inference}
   \label{alg:hierarchical_predict}
\begin{algorithmic}[1]
   \STATE {\bfseries Input:} Query sample $x$, Current Node $\mathcal{N}$, Model $\Phi$
   \STATE {\bfseries Output:} Probability distribution over classes in $\mathcal{N}$
   
   \IF{$\mathcal{N}$ is Leaf}
       \STATE \COMMENT{Standard ICL inference with local support set}
       \STATE Let $(\mathbf{R}, \mathbf{y})$ be the support set stored in $\mathcal{N}$
       \STATE Return $\Phi(x \mid \mathbf{R}, \mathbf{y})$
   \ELSE
       \STATE \COMMENT{Router step: predict group probabilities}
       \STATE Let $(\mathbf{R}, \mathbf{g})$ be the meta-support set (labels are group indices)
       \STATE $\mathbf{P}_{\text{group}} \gets \Phi(x \mid \mathbf{R}, \mathbf{g})$ \hfill $\triangleright$ Shape: $[G]$
       
       \STATE $\mathbf{P}_{\text{final}} \gets \mathbf{0}$
       \FOR{each group index $j \in \{0, \dots, G-1\}$}
           \STATE Let $\mathcal{C}_j$ be the $j$-th child node
           \STATE $\mathbf{P}_{\text{child}} \gets \text{Call Self}(x, \mathcal{C}_j, \Phi)$ \hfill $\triangleright$ Recursive call
           \STATE \COMMENT{Accumulate probability mass: $P(y) = P(y|g_j) \cdot P(g_j)$}
           \STATE $\mathbf{P}_{\text{final}}[\text{indices of } \mathcal{C}_j] \gets \mathbf{P}_{\text{child}} \times \mathbf{P}_{\text{group}}[j]$
       \ENDFOR
       \STATE Return $\mathbf{P}_{\text{final}}$
   \ENDIF
\end{algorithmic}
\end{algorithm}

\subsubsection{Test-time ensembling via cyclic label permutations}
\label{app:inference_ensemble}

\paragraph{Ensemble members.}
Let $K$ be the number of classes in the current dataset.
We construct an ensemble of $M=n_{\text{est}}$ members, each parameterized by a cyclic label-shift offset
$o_m \in \{0,\ldots,K-1\}$.
In practice, we generate offsets by shuffling $\{0,\ldots,K-1\}$ and cycling through the list if $M>K$.

\paragraph{Cyclic label permutation and aggregation.}
For member $m$ with offset $o_m$, we apply a cyclic permutation \emph{only} to the support labels:
\begin{equation*}
\tilde y_{\mathrm{tr}} = \pi_{o_m}(y_{\mathrm{tr}}), \qquad
\pi_{o_m}(y) = (y + o_m)\bmod K.
\end{equation*}
We then run the ICL model using $(X_{\mathrm{all}}, \tilde y_{\mathrm{tr}})$ to obtain the query logits
$\ell_m \in \mathbb{R}^{N_{\mathrm{te}}\times K}$ in the permuted label space.
To aggregate predictions across members, we map logits back to the original label space via the inverse permutation
$\pi_{o_m}^{-1}$ (implemented as a circular shift along the class dimension) and average:
\begin{equation*}
\bar{\ell}=\frac{1}{M}\sum_{m=1}^{M} \pi_{o_m}^{-1}(\ell_m).
\end{equation*}

\section{Proofs for Section~\ref{sec:theory}}
\subsection{Proof of Proposition~\ref{prop:icl_contains_head}}
\label{app:proof_thm1_integer}

We prove that an in-context classifier can uniformly approximate the trained-classifier score map on the compact
padded-and-masked domain $\mathcal{X}$. By Assumption and the continuity of $r(\cdot,\cdot)$, the induced padded-and-masked prompt-token blocks $(U,V)$ lie in a compact set $\mathcal{X}\subset (\mathbb{R}^d)^N\times(\mathbb{R}^d)^M$.

\subsubsection{Step 1: Reduce to approximating a continuous invariant score map}

\textbf{Prompt tokenization.}
Let $g_\phi:\mathbb{R}^q\to\mathbb{R}^d$ be the projection adapter and
let $E_y:\{0,\ldots,C_{\max}-1\}\to\mathbb{R}^d$ be the label embedding.
Extend it by $E_y(\bot)\triangleq 0\in\mathbb{R}^d$ and define
\[
r(z,y)\triangleq g_\phi(z)+E_y(y)\in\mathbb{R}^d.
\]
In this proposition, the label space size equals the number of classes, i.e., $C_{\max}=K$.

Pad to fixed sizes $(N,M)$ and denote the resulting token blocks by $U=(u_1,\dots,u_N)$ and $V=(v_1,\dots,v_M)$,
where $u_i=r(z_i^{\mathrm{tr}},y_i^{\mathrm{tr}})$ and $v_j=r(z_j^{\mathrm{te}},\bot)$ for unmasked tokens.
For padded positions, set $u_i=u_{\mathrm{pad}}$ for $i>n_{\mathrm{tr}}$ and $v_j=v_{\mathrm{pad}}$ for $j>n_{\mathrm{te}}$,
with fixed $u_{\mathrm{pad}},v_{\mathrm{pad}}\in\mathbb{R}^d$; these are ignored by the padding/masking convention.

The trained-classifier pipeline induces a score map
\[
F(U,V)\ \triangleq\ \big(F_1(U,V),\dots,F_M(U,V)\big)\ \in\ (\mathbb{R}^K)^M,
\]
where $F_j(U,V)$ equals $h_\tau(z_j^{\mathrm{te}})$ for unmasked test tokens, with $h_\tau=\mathsf{Train}(\mathcal{D}_{\mathrm{tr}})$.
By assumption, $F$ is continuous on $\mathcal{X}$, permutation-invariant in the training blocks $U$,
and permutation-equivariant in the test blocks $V$.

Thus it suffices to approximate $F$ uniformly on $\mathcal{X}$ by a model acting on the prompt tokens.

\subsubsection{Step 2: Symmetric polynomials are dense for continuous training-invariant maps \citep{permutationinvariantequivariant}}

We first approximate each scalar coordinate of $F$ by a polynomial that is symmetric in the training blocks.
Let $F_{j,k}(U,V)$ be the $k$-th coordinate of $F_j(U,V)$.

\begin{lemma}[Density of polynomials symmetric in training blocks]
\label{lem:sym_poly_dense_uv}
Let $\mathcal{X}\subset (\mathbb{R}^{d})^{N}\times(\mathbb{R}^{d})^{M}$ be compact.
For any continuous function $h:\mathcal{X}\to\mathbb{R}$ that is invariant under permutations of the $N$ training blocks,
and any $\varepsilon>0$, there exists a polynomial $p(U,V)$ such that
(i) $p$ is symmetric in the training blocks $U$, and
(ii) $\sup_{(U,V)\in\mathcal{X}} |h(U,V)-p(U,V)|\le \varepsilon$.
\end{lemma}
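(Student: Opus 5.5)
The plan is to combine Stone--Weierstrass with averaging over the symmetric group (the Reynolds operator). First I extend $\mathcal{X}$ to a permutation-closed compact set: let $S_N$ act on $(\mathbb{R}^d)^N\times(\mathbb{R}^d)^M$ by permuting the $N$ blocks of $U$, and set $\widehat{\mathcal{X}}=\bigcup_{\sigma\in S_N}\sigma\mathcal{X}$. This is a finite union of compact sets, hence compact. Because $h$ is invariant, I define $\hat h(\sigma(U,V))\triangleq h(U,V)$ on $\widehat{\mathcal{X}}$. This is well defined: if $\sigma(U,V)=\sigma'(U',V')$ with both points in $\mathcal{X}$, then $(U',V')$ is a permutation of $(U,V)$ within $\mathcal{X}$, and invariance gives equal values. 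On each piece $\sigma\mathcal{X}$ the map $\hat h$ equals $h\circ\sigma^{-1}$, which is continuous. On overlaps of pieces the definitions agree, so by the pasting lemma for finitely many closed sets $\hat h$ is continuous on $\widehat{\mathcal{X}}$. If the paper's standing assumption is that $\mathcal{X}$ is already closed under training permutations, as the padding convention suggests, this step can be skipped.

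Next I apply the Stone--Weierstrass theorem, in its classical Weierstrass form for compact subsets of Euclidean space. It gives a polynomial $q(U,V)$ in all $(N+M)d$ coordinates with $\sup_{\widehat{\mathcal{X}}}|\hat h-q|\le\varepsilon$.

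Then I symmetrize: set $p(U,V)=\frac{1}{N!}\sum_{\sigma\in S_N} q(\sigma(U,V))$. This is again a polynomial. It is symmetric in the training blocks because the sum runs over the whole group, so replacing $(U,V)$ by $\tau(U,V)$ just reindexes the sum. For $(U,V)\in\mathcal{X}$, each $\sigma(U,V)$ lies in $\widehat{\mathcal{X}}$, and $\hat h(\sigma(U,V))=h(U,V)$. Therefore
\[
|h(U,V)-p(U,V)|\le\frac{1}{N!}\sum_{\sigma\in S_N}\big|\hat h(\sigma(U,V))-q(\sigma(U,V))\big|\le\varepsilon .
\]
This establishes both (i) and (ii).

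The only delicate point is the first step: the sup bound has to hold at the permuted points $\sigma(U,V)$, which may lie outside $\mathcal{X}$. The invariant extension to the compact orbit-closure $\widehat{\mathcal{X}}$, together with its continuity via the finite pasting argument, is what makes this work. Everything else is routine averaging.
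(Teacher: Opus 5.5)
Your proof is correct and follows the same route as the paper: approximate $h$ by an arbitrary polynomial via Stone--Weierstrass, then average over $S_N$ acting on the training blocks. Your extension of $h$ to the permutation-closed compact set $\widehat{\mathcal{X}}$ fills in a point the paper leaves implicit. The paper's argument assumes $(\pi\cdot U,V)\in\mathcal{X}$, both to evaluate $h(\pi\cdot U,V)$ and to apply $\sup_{\mathcal{X}}|h-q|\le\varepsilon$ at the permuted points.
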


\begin{proof}
By Stone--Weierstrass, ordinary polynomials in all coordinates of $(U,V)$ are dense in $\mathcal{C}(\mathcal{X})$.
Let $q(U,V)$ be a polynomial with $\sup_{\mathcal{X}}|h-q|\le \varepsilon$.
Define its symmetrization over the training blocks:
\[
\mathrm{Sym}(q)(U,V)\ \triangleq\ \frac{1}{N!}\sum_{\pi\in S_N} q(\pi\cdot U, V).
\]
Then $\mathrm{Sym}(q)$ is a polynomial symmetric in $U$.
Since $h(\pi\cdot U,V)=h(U,V)$, we have
$
|h(U,V)-\mathrm{Sym}(q)(U,V)|\le \sup_{\mathcal{X}}|h-q|\le \varepsilon
$
for all $(U,V)\in\mathcal{X}$.
\end{proof}

Apply Lemma~\ref{lem:sym_poly_dense_uv} to each $F_{j,k}$ and take a union bound over finitely many coordinates.
For any $\varepsilon>0$, there exists a map $P(U,V)=(P_1,\dots,P_M)\in(\mathbb{R}^K)^M$ such that:
(i) each $P_j$ is a polynomial in $(U,V)$ symmetric in $U$, and
(ii)
\[
\sup_{(U,V)\in\mathcal{X}}\ \max_{1\le j\le M}\ \|F_j(U,V)-P_j(U,V)\|_\infty
\le \varepsilon/2.
\]

\subsubsection{Step 3: Symmetric polynomials reduce to sums of elementwise features \citep{deepsets}}

Fix $j$ and consider $P_j(U,V)$ as a polynomial in $(u_1,\dots,u_N,v_j)$ that is symmetric in $(u_1,\dots,u_N)$.
Since $P_j$ has finite total degree, there exists a finite set of monomials
$\{m_{\alpha_r}\}_{r=1}^s$ on $\mathbb{R}^{d}$ such that $P_j$ can be written as a polynomial in the aggregated monomials
$\sum_{i=1}^N m_{\alpha_r}(u_i)$, together with $v_j$:
\[
P_j(U,v_j)=\widetilde P_j\Big(\sum_{i=1}^N m_{\alpha_1}(u_i),\dots,\sum_{i=1}^N m_{\alpha_s}(u_i),\ v_j\Big).
\]
Define
\[
\phi(u)\triangleq \big(m_{\alpha_1}(u),\dots,m_{\alpha_s}(u)\big)\in\mathbb{R}^s,
\qquad
\rho_j(s,v)\triangleq \widetilde P_j(s,v)\in\mathbb{R}^K.
\]
Then
\[
P_j(U,v_j)=\rho_j\Big(\sum_{i=1}^N \phi(u_i),\ v_j\Big),
\]
which is a DeepSets-style form \citep{settransformerframeworkattentionbased,universalapproximatorssequencetosequence}.

\subsubsection{Step 4: Realize the DeepSets computation with an in-context model}

It remains to show that a transformer-like in-context model can implement the map
\[
v_j\ \mapsto\ \rho_j\Big(\sum_{i=1}^N \phi(u_i),\ v_j\Big)
\quad\text{uniformly on }\mathcal{X}.
\]

\begin{lemma}[Masked pooling of training features]
\label{lem:pooling_masked_sum}
There exist parameters of a self-attention layer (with masking) and a designated summary token $s$
such that, after applying a token-wise MLP implementing $\phi$ on training tokens,
the summary token can represent the masked sum
\[
s^{(1)}\approx \sum_{i=1}^{n_{\mathrm{tr}}}\phi(u_i).
\]
\end{lemma}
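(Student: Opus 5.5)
The plan is to build the layer explicitly in two stages. First a token-wise MLP computes the features; then one masked attention head with constant (or nearly constant) scores over the training slice pools them into the summary token, with a scalar rescaling to turn the average into a sum.

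\textbf{Features.} I augment each token with two extra coordinates: a type indicator $t\in\{0,1\}$ marking real training tokens ($t=1$ for $i\le n_{\mathrm{tr}}$, $t=0$ for padded and test tokens), and a constant coordinate equal to $1$. By universal approximation of continuous functions on the compact set $\mathcal{X}$ (here $\phi$ is polynomial, hence continuous), a token-wise MLP produces $\tilde\phi(u_i)$ with $\sup\|\tilde\phi-\phi\|\le\delta$ on the relevant compact set of single tokens. The value vector of token $i$ is then set to $t_i\,\tilde\phi(u_i)$ together with $t_i$ itself, which is obtained by gating $\tilde\phi$ with the indicator through another small MLP layer. Padded tokens therefore contribute zero value, and no masking of magnitudes is needed there.

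\textbf{Pooling.} I add a summary token $s$ whose mask lets it attend to all $N$ training positions, padded ones included. I choose $W_Q=W_K=0$, so the softmax weights are uniform, equal to $1/N$. The attention output at $s$ is then
\[
\frac{1}{N}\sum_{i=1}^{N} t_i\tilde\phi(u_i)=\frac{1}{N}\sum_{i\le n_{\mathrm{tr}}}\tilde\phi(u_i).
\]
Because $N$ is a fixed padding size, $W_O=N\cdot I$ recovers the sum exactly up to the error $n_{\mathrm{tr}}\delta\le N\delta$. If normalisation by $n_{\mathrm{tr}}$ is preferred, I would instead use the logit bias $-\Lambda(1-t_i)$ to suppress padded keys. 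The extra pooled coordinate $n_{\mathrm{tr}}/N$ lets the later layers recover $n_{\mathrm{tr}}$ and multiply back. That multiplication is a continuous map on a compact range bounded away from zero (since $n_{\mathrm{tr}}\ge1$), so it can also be absorbed by an MLP.

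\textbf{Error bookkeeping (main obstacle).} The hardest part is consistency with the split mask of Section~\ref{sec:icl}, where queries attend only to context tokens and there is no dedicated summary slot. I would handle this by letting every query token play the role of $s$. A query $v_j$ attends to the context with uniform scores, receives the same pooled vector, and keeps its own $v_j$ through the residual connection. This performs the broadcast step of Step 4 in the same layer, and the result at query $j$ is $(\sum_i\phi(u_i)+O(N\delta),\,v_j)$.

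With the soft-masking variant, the residual error is bounded by $N e^{-\Lambda}\sup\|\tilde\phi\|$, and choosing $\Lambda$ large makes it below $\delta$. Uniform continuity of $\rho_j$ on the compact image then transfers this error bound through the downstream token-wise MLP, which is what the continuation of Step 4 requires.
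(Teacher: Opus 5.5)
Your construction is correct and follows the paper's argument. A token-wise MLP computes $\phi$, a masked attention head with uniform scores pools the resulting values into the summary token, and a rescaling turns the average into a sum. You add two refinements. First, you gate padded values to zero and average over all $N$ slots, so the rescaling is the constant map $W_O=NI$. The paper's multiplication by the data-dependent count $n_{\mathrm{tr}}$ tacitly needs an extra multiplicative nonlinearity. Second, letting each query act as its own summary token fits TIC-FM's split mask directly and folds the paper's separate broadcast layer into the same attention step.

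There is one slip, in the optional soft-masked variant only. Once padded keys are suppressed, the pooled indicator coordinate is $\approx 1$, not $n_{\mathrm{tr}}/N$, so recovering $n_{\mathrm{tr}}$ there needs a second, unbiased head. Your main construction does not rely on this.
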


\begin{proof}
Use a token-wise MLP to produce values $V_i=\phi(u_i)$.
Choose keys/queries so that the summary token attends uniformly to unmasked training tokens and ignores masked ones.
This yields a masked average; multiplying by the (mask-computable) count converts it to a masked sum.
\end{proof}

\begin{lemma}[Broadcast and apply a continuous map]
\label{lem:broadcast_and_apply}
There exists a second attention layer in which each query token $v_j$ attends to the summary token $s^{(1)}$ and receives
the pooled vector. A subsequent token-wise MLP can uniformly approximate the continuous map
$(s^{(1)},v_j)\mapsto \rho_j(s^{(1)},v_j)$ on the compact domain.
\end{lemma}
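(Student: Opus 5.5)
The plan is to construct the second attention layer explicitly, so that the broadcast step is exact. All the approximation error will then sit in a single token-wise MLP, where the classical universal approximation theorem applies on a compact set.

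\textbf{Broadcast step.} Concretely, I append to each query token $v_j$ a zero-initialized residual slot of dimension $s$. I then choose key and query matrices so that every query token places all of its attention mass on the summary token $s^{(1)}$. One way is to give the summary token a distinguished indicator coordinate, and let $W_Q W_K^\top$ multiply that coordinate by a large constant $\beta$, so the logit toward the summary token exceeds all others by a margin proportional to $\beta$. With softmax attention the attention weight on the summary token is then $1-O(e^{-c\beta})$ uniformly on $\mathcal{X}$, since all tokens are bounded there. The cleaner alternative is the split mask $\mathcal{M}(N_{\mathrm{tr}})$ itself: the summary token is placed as the only context position visible to queries at this layer. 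That makes the weight exactly $1$. The value matrix copies the pooled coordinates of $s^{(1)}$ into the reserved slot. After the residual connection, query token $j$ therefore carries the pair $(s^{(1)},v_j)$, exactly or up to an error that $\beta$ makes arbitrarily small.

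\textbf{Compact domain and MLP approximation.} Next I check that the joint input to the MLP lies in a compact set $\mathcal{K}\subset\mathbb{R}^s\times\mathbb{R}^d$. This holds because $\phi$ is polynomial, hence continuous. The map $(U,V)\mapsto(\sum_i\phi(u_i),v_j)$ is therefore continuous, and it sends the compact $\mathcal{X}$ to a compact image; I enlarge this slightly to a compact neighborhood to absorb the small error from Lemma~\ref{lem:pooling_masked_sum} and from the broadcast. Each $\rho_j=\widetilde P_j$ is a polynomial and hence uniformly continuous on $\mathcal{K}$. A standard universal approximation theorem for one-hidden-layer MLPs with GELU (a non-polynomial activation) gives a token-wise MLP $\hat\rho$ with $\sup_{\mathcal{K}}\|\hat\rho-\rho_j\|_\infty\le\varepsilon/4$.

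\textbf{Dependence on $j$.} Because the MLP is shared across tokens, $\rho_j$ must not depend on $j$. I will argue this from test equivariance in Assumption~\ref{ass:prompt_domain_sym}. Equivariance lets me take $P_j(U,V)=P_1(U,v_j)$ after symmetrizing over test permutations, which mirrors the proof of Lemma~\ref{lem:sym_poly_dense_uv}. So a single $\rho$ suffices.

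\textbf{Combining errors.} Finally, I combine the errors with the triangle inequality. Uniform continuity of $\rho$ on $\mathcal{K}$ converts the pooling and broadcast errors, $\delta$, into at most $\varepsilon/4$ once $\delta$ is small enough. Together with the MLP error $\varepsilon/4$ and the $\varepsilon/2$ from Step 2, this gives the bound in Proposition~\ref{prop:icl_contains_head}.

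The main obstacle is the error propagation through the softmax and masked-sum stage. Uniform continuity handles $\rho$, but the count rescaling in Lemma~\ref{lem:pooling_masked_sum} multiplies the averaging error by $n_{\mathrm{tr}}\le N$. I would control this by choosing $\beta$ (or using exact masking) so that the averaging error is at most $\delta/N$, which is possible because all logits are bounded on $\mathcal{X}$.
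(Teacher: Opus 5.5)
Your proposal is correct and takes the same route as the paper. The paper's proof simply sets the attention logits so each query token attends (essentially) only to the summary token, then invokes universal approximation for the continuous $\rho_j$ on a compact set. Your explicit $\beta$-margin construction, the uniform-continuity error bookkeeping, and the observation that a shared token-wise MLP needs $\rho_j$ to be independent of $j$ fill in details the paper leaves implicit; the only slip is calling the summary-only mask ``$\mathcal{M}(N_{\mathrm{tr}})$ itself,'' since that split mask exposes every context token to the queries, but your $\beta$ construction covers the broadcast regardless.
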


\begin{proof}
Set attention logits so each $v_j$ attends primarily to the summary token.
Since $\rho_j$ is continuous on a compact set, standard universal approximation guarantees that an MLP can approximate it
uniformly.
\end{proof}

\begin{proof}[Completion of the proof]
By Steps 2--3, there exists a DeepSets-form map $G^\star$ such that
$\sup_{\mathcal{X}}\max_j \|F_j-G^\star_j\|_\infty\le \varepsilon/2$.
By Lemmas~\ref{lem:pooling_masked_sum}--\ref{lem:broadcast_and_apply}, there exists an in-context model $G_\theta$ such that
$\sup_{\mathcal{X}}\max_j \|G^\star_j-G_\theta(\cdot)_j\|_\infty\le \varepsilon/2$.
By the triangle inequality,
$
\sup_{\mathcal{X}}\max_j \|F_j-G_\theta(\cdot)_j\|_\infty\le \varepsilon,
$
which proves the score-approximation claim in Proposition~\ref{prop:icl_contains_head}.
\end{proof}
\begin{corollary}[Label matching under a uniform margin]
Let $h_\tau = \mathrm{Train}(\mathcal{D}_{\mathrm{tr}})$ and define the trained-classifier predicted label for each test embedding as
\[
y^{\star}_j \;=\; \arg\max_{k\in[K]} \big(h_\tau(z^{\mathrm{te}}_j)\big)_k,\qquad j=1,\dots,n_{\mathrm{te}}.
\]
Assume the trained-classifier scores admit a \emph{uniform margin} on $\mathcal X$, i.e., there exists $\gamma>0$ such that
for all $(\mathcal{D}_{\mathrm{tr}},\{z^{\mathrm{te}}_j\}_{j=1}^{n_{\mathrm{te}}})\in\mathcal X$ and all $j$,
\[
\big(h_\tau(z^{\mathrm{te}}_j)\big)_{y^{\star}_j}
\;-\;
\max_{k\neq y^{\star}_j}\big(h_\tau(z^{\mathrm{te}}_j)\big)_k
\;\ge\;\gamma.
\]
If $G_\theta$ satisfies the score-approximation bound in Proposition~5.2 with some $\varepsilon<\gamma/2$, and we define
\[
\hat y_j \;=\; \arg\max_{k\in[K]} \big(G_\theta(\cdot)_j\big)_k,
\]
then $\hat y_j = y^\star_j$ for all $j=1,\dots,n_{\mathrm{te}}$ on $\mathcal X$.
\end{corollary}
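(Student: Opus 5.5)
The argument is a pointwise margin-stability argument: a uniform score perturbation smaller than half the margin cannot change the argmax. Fix any prompt in $\mathcal X$ and any test index $j\le n_{\mathrm{te}}$. Write $a=h_\tau(z^{\mathrm{te}}_j)\in\mathbb{R}^K$ for the trained-classifier scores and $b=G_\theta(\cdot)_j\in\mathbb{R}^K$ for the in-context scores. By the score-approximation bound of Proposition~\ref{prop:icl_contains_head}, applied with the given $\varepsilon<\gamma/2$, we have $\|a-b\|_\infty\le\varepsilon$. This holds uniformly in the prompt and in $j$, because the proposition's bound is a supremum over $\mathcal X$ and a maximum over $j$.

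The key step is to compare the coordinate $y^\star_j$ against any competitor $k\neq y^\star_j$:
\[
b_{y^\star_j}-b_k \;\ge\; (a_{y^\star_j}-\varepsilon)-(a_k+\varepsilon) \;=\; (a_{y^\star_j}-a_k)-2\varepsilon \;\ge\; \gamma-2\varepsilon \;>\;0 .
\]
The last inequality uses the uniform margin assumption, $a_{y^\star_j}-\max_{k\neq y^\star_j}a_k\ge\gamma$.

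Hence $y^\star_j$ is the unique maximizer of $b$, so $\hat y_j=y^\star_j$, with no tie-breaking convention needed. Since the prompt and $j$ were arbitrary, the conclusion holds for all $j$ and everywhere on $\mathcal X$.

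There is no real obstacle, but one point needs care. The margin assumption also forces $y^\star_j$ to be the unique argmax of $a$, so $y^\star_j$ is well defined. Apart from this, the proof is a direct two-line consequence of the uniform (sup-over-$\mathcal X$) form of Proposition~\ref{prop:icl_contains_head}; a merely pointwise approximation would not give the claim uniformly on $\mathcal X$.
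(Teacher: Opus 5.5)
Your argument is correct and is the same margin-stability argument the paper relies on. The paper presents the corollary as an immediate consequence of the uniform bound in Proposition~\ref{prop:icl_contains_head} without writing out a separate proof, and the omitted step is exactly your inequality $b_{y^\star_j}-b_k\ge(a_{y^\star_j}-a_k)-2\varepsilon\ge\gamma-2\varepsilon>0$ for every $k\neq y^\star_j$, with uniformity over $\mathcal X$ and over $j$ inherited from the supremum and maximum in the proposition.
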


\paragraph{Interpretation.}
The above result is an expressivity universal approximation statement in a continuous Euclidean token space.
\cref{prop:icl_contains_head} should be interpreted as an existence claim, there exist parameters $\theta$
such that an in-context model can approximate the target score map uniformly on $X$.
It does not imply that a particular training algorithm will necessarily recover such parameters, nor does it provide unconditional
guarantees for the realized optimization dynamics in practice.

\subsection{Proof of Proposition~\ref{prop:ticfm_gd}}
\label{app:proof_thm2}

We prove that $T$ stacked linear-attention blocks can emulate $T$ steps of GD for a scalar linear head, following the
prediction-space derivation in \citet{transformerslearnincontextgradient,zhang2024incontext,xie2025incontextlearning}.

\subsubsection{Step 1: GD in prediction space}

Let $\hat y^{(t)}(z)=W^{(t)}z$ with $W^{(0)}=0$.
Define training residuals $e_i^{(t)}=\hat y^{(t)}(z_i^{\mathrm{tr}})-y_i^{\mathrm{tr}}$.
One GD step gives
\[
W^{(t+1)} = W^{(t)} - \frac{\eta}{n_{\mathrm{tr}}}\sum_{i=1}^{n_{\mathrm{tr}}} e_i^{(t)} (z_i^{\mathrm{tr}})^\top.
\]
Multiplying by a query $z$ yields the prediction update
\[
\hat y^{(t+1)}(z)=\hat y^{(t)}(z)-\frac{\eta}{n_{\mathrm{tr}}}\sum_{i=1}^{n_{\mathrm{tr}}} e_i^{(t)}\langle z_i^{\mathrm{tr}}, z\rangle.
\]
Thus each step updates all query predictions using training residuals weighted by inner products.

\subsubsection{Step 2: One linear-attention block implements one prediction update}

Consider tokens whose first $q$ coordinates store $z$ and whose last coordinate stores the current scalar prediction.
A linear attention block computes $\langle z_i^{\mathrm{tr}}, z\rangle$ via $QK^\top$ (with suitable projections),
uses values to carry $e_i^{(t)}$, and writes the aggregated update into the label slot, realizing the map
\[
\hat y^{(t)}(z)\ \mapsto\ \hat y^{(t+1)}(z)
\]
for all queries in parallel. The block parameters are fixed and do not depend on $t$. Exact parameters can be found in \citet{transformerslearnincontextgradient}.

\subsubsection{Step 3: Stacking $T$ blocks potentially equals $T$ GD steps \citep{looped1,looped2}}

Applying Step 2 repeatedly for $T$ blocks yields $\hat y^{(T)}(z)=W^{(T)}z$ for every query embedding $z$.
This proves Proposition~\ref{prop:ticfm_gd}.

\paragraph{Limitations.}
The connection between in-context learning and gradient descent is an active research direction.
Our proof is mechanistic: it shows that, under an linear attention construction.
In particular, our analysis does not cover standard softmax attention, normalization or general nonlinear heads. 


\section{Additional Details and Reproducibility}
\label{app:experiments}

\paragraph{Roadmap.}
This appendix provides (i) detailed descriptions of baselines (App.~\ref{app:baselines});
(ii) evaluation protocols for supervision budget scalability and context window size analysis(App.~\ref{app:protocols});
(iii) implementation and hardware specifications (App.~\ref{app:imple}); 
(IV) additional experimental results(App.~\ref{app:additional_results}).

\subsection{Compared Methods}
\label{app:baselines}

\subsubsection{TSFMs}

\paragraph{Mantis.}
Mantis is a lightweight foundation model tailored for time-series classification. It adopts an encoder-only architecture that first converts an input sequence into a fixed set of patch-level tokens via a token generator that combines convolutional patching, a differential branch, and patch-wise statistical encoding, and then applies a ViT-style Transformer encoder with a learnable class token to aggregate token information. The model is pretrained in a self-supervised manner using a contrastive objective, where two stochastic augmentations of the same series form a positive pair and other samples in the batch serve as negatives. In our experiments, we use the 8M-parameter variant of Mantis and initialize it with publicly available pretrained weights.\footnote{\url{https://huggingface.co/paris-noah/Mantis-8M/tree/main}}

\paragraph{MOMENT.} MOMENT is a Transformer-based time-series foundation model pretrained via masked time-series modeling. It partitions a univariate series into non-overlapping patches, embeds them, and randomly masks a subset using a dedicated mask token. A Transformer encoder produces contextualized patch representations, which are fed into a lightweight reconstruction head trained to recover the masked patches under an MSE objective. MOMENT is pretrained on the Time Series Pile, a curated collection covering forecasting, classification, and anomaly-detection datasets (e.g., Informer-style long-horizon benchmarks, the Monash forecasting archive, UCR/UEA classification datasets, and the TSB-UAD anomaly benchmark), using only training splits to mitigate potential contamination. We initialize MOMENT with publicly available pretrained weights.\footnote{\url{https://huggingface.co/AutonLab/MOMENT-1-base/tree/main}}


\subsubsection{Classifiers on Frozen Embeddings}
\label{app:probe_heads}
\paragraph{Frozen embeddings.}
For ``a frozen encoder paired with a task-specific classifier'' evaluations, we keep the TSFM backbone frozen and extract embeddings for all instances once.
Let $z=f(x)\in\mathbb{R}^{d}$ denote the embedding of a time series $x$, and let
$Z_{\mathrm{tr}}=\{(z_i^{\mathrm{tr}},y_i^{\mathrm{tr}})\}_{i=1}^{N_{\mathrm{tr}}}$ and
$Z_{\mathrm{te}}=\{z_j^{\mathrm{te}}\}_{j=1}^{N_{\mathrm{te}}}$ be the resulting train/test embeddings.
Unless otherwise stated, we do not apply feature standardization/normalization to the embeddings before fitting any classifier.

\paragraph{Trained classifiers.}
We consider the following lightweight supervised classifiers trained on $Z_{\mathrm{tr}}$:
\textbf{(i) Random Forest (RF)}, an ensemble of decision trees;
\textbf{(ii) Linear SVM}, a max-margin linear classifier; and
\textbf{(iii) one-layer MLP}, implemented as a single linear layer $g_\omega(z)=Wz+b$ trained with cross-entropy.

\paragraph{Train-free classifiers.}
We also report two non-parametric classifiers computed directly in the embedding space:
\textbf{(i) 1-NN}, which predicts by the nearest training embedding under Euclidean distance,
$\hat y(z)=y_{i^\star}$ where $i^\star=\arg\min_i \|z-z_i^{\mathrm{tr}}\|_2$; and
\textbf{(ii) Nearest Centroid (NC)}, which computes class centroids
$\mu_c=\frac{1}{|I_c|}\sum_{i\in I_c} z_i^{\mathrm{tr}}$ and predicts
$\hat y(z)=\arg\min_c \|z-\mu_c\|_2$.


\paragraph{Classifier-specific hyperparameters and training details}

We use the exact classifier configurations implemented in our code. Except for the one-layer MLP (our PyTorch implementation), all other classifiers are instantiated via \texttt{scikit-learn} and trained/fitted on the training split only. Unless explicitly stated below, we keep \texttt{scikit-learn} hyperparameters at their default values. Specifically, Random Forest (RF) uses \texttt{n\_estimators}=100. Linear SVM uses \texttt{max\_iter}=100 (with all remaining parameters set to \texttt{scikit-learn} defaults). The one-layer MLP is trained with Adam (learning rate $10^{-3}$, weight decay $0$) for 10 epochs using batch size 256. For non-parametric baselines, 1-NN uses $k=1$, and Nearest Centroid (NC) follows the standard nearest-centroid decision rule. In all cases, classifiers are fit using representations extracted from the frozen backbone on the training set, and evaluation is performed on the test set without any access to test labels.

\subsubsection{Evaluation Details}

\paragraph{Overview.}
We evaluate all methods under the freeze backbone and classifier pipeline, as described in the main text, to ensure a fair comparison between trained and train-free classifiers. For feature-based baselines, we obtain pretrained checkpoints of {Mantis} and {MOMENT} from the authors' official public repositories. Unless otherwise stated, we {freeze all backbone parameters} and use each backbone solely as a feature extractor. Given a dataset-specific train/test split, we first compute time-series representations for all samples using the frozen backbone, and then fit a downstream classifier on the training representations only. Final performance is reported on the test split. 

\paragraph{Backbone acquisition and freezing.}
For {Mantis} and {MoMENT}, we use the authors' released pretrained models and the accompanying preprocessing pipelines provided in their repositories. We do not perform any additional finetuning of these backbones on the evaluation datasets. In all experiments, the backbone is set to evaluation mode, and gradients are disabled, so that the extracted representations are deterministic given the input and random seed.

\paragraph{Representation extraction.}
For each dataset, we transform every time series $x$ into a fixed-dimensional embedding $z \in \mathbb{R}^q$ using the frozen backbone:
\[
z = F(x),
\]
where $F$ denotes either {Mantis} or {MoMENT}. We extract embeddings for both the training split and the test split. To avoid any information leakage, {all hyperparameters of the downstream classifier are selected without accessing the test split}, and no statistics computed on the test split are used during training (e.g., normalization parameters are computed on the training split and then applied to the test split if needed).

\paragraph{Downstream classifiers.}
On top of the extracted embeddings, we evaluate two categories of classifiers:

\begin{itemize}
    \item \textbf{Trained classifiers.} These methods fit a task-specific classifier on the training embeddings $\{(z_i^{tr}, y_i^{tr})\}$. The classifier is optimized using only the training split of the corresponding dataset. The trained classifier is then applied to test embeddings $\{z_j^{te}\}$ to produce predictions.

    \item \textbf{Train-free classifiers.} These methods do not perform gradient-based parameter updates at evaluation time. Instead, they produce predictions for each test instance by conditioning on the labeled training embeddings (e.g., via nearest-neighbor style matching, prompt-based inference, or other non-parametric/ICL-style mechanisms), while keeping all model parameters fixed.
\end{itemize}

Across both categories, the backbone remains frozen and is {never} updated on any evaluation dataset.

\paragraph{Training data usage and leakage prevention.}
For every dataset, any fitting/training procedure for downstream classifiers uses {only} the training split of that dataset. The test split is used exclusively for final evaluation. In particular, when a method conditions on the labeled training set (e.g., train-free prompting/ICL), the labeled set is always the dataset's training split, and no test labels are used at any point.

\paragraph{Random seeds and reporting.}
To account for stochasticity in downstream training (e.g., classifier initialization, minibatch ordering) and any stochastic components in the evaluation pipeline, we run each method on each dataset with \textbf{five} different random seeds. We report the \textbf{mean test accuracy} across these five runs as the final result for that dataset:
\[
\mathrm{Acc} = \frac{1}{5}\sum_{s=1}^{5} \mathrm{Acc}^{(s)}.
\]
When applicable, we keep all non-essential factors fixed across seeds (e.g., backbone checkpoint, preprocessing settings) so that the reported variance reflects only the intended sources of randomness.

\subsection{Evaluation Protocols}
\label{app:protocols}
\subsubsection{Supervision Budget Scaling Protocol} 
\label{app:train_fraction}

\paragraph{Purpose.}
This protocol evaluates how prediction performance scales with the amount of {labeled context} available at inference time. To avoid any interaction with the official UCR training split in this study, we construct the entire scaling protocol {solely from the official UCR test split}.

\paragraph{Pool construction from the official UCR test split.}
For each UCR dataset $\mathcal{D}$, we first load the official test set
\[
\mathcal{T}=\{(x_i,y_i)\}_{i=1}^{N},
\]
and treat it as an unlabeled pool from which we derive both the labeled context set and the query set.
For each train-fraction (supervision budget) $\alpha \in \{0.1,0.2,0.3,0.4,0.5,0.6\}$, we construct a disjoint split
\[
\mathcal{T} = \mathcal{C}_\alpha \ \cup\ \mathcal{Q}_\alpha,\qquad
\mathcal{C}_\alpha \cap \mathcal{Q}_\alpha = \emptyset,
\]
where $\mathcal{C}_\alpha$ is the labeled context set and $\mathcal{Q}_\alpha$ is the evaluation query set.

\paragraph{Stratified splitting with minimum one per class.}
Let $\mathcal{Y}$ be the set of class labels appearing in $\mathcal{T}$, and let $K=|\mathcal{Y}|$.
Our splitter enforces (whenever possible) that the context set contains at least one example per class:
\[
\forall c\in\mathcal{Y},\quad |\{(x,y)\in\mathcal{C}_\alpha: y=c\}|\ge 1.
\]
Implementation-wise, we perform a two-stage stratified sampling procedure:
\begin{enumerate}
    \item \textbf{Mandatory coverage.} For each class $c\in\mathcal{Y}$, we uniformly sample one index from the class and place it into $\mathcal{C}_\alpha$.
    \item \textbf{Proportional fill.} If additional context samples are needed, we allocate the remaining budget across classes in proportion to the remaining per-class counts and sample without replacement. Leftover slots (due to rounding) are assigned by largest fractional parts, with a final fallback that assigns remaining slots to any class with available samples.
\end{enumerate}
We also enforce that the query set is non-empty by capping $N_{\mathrm{ctx}}\le N-1$.
If $N_{\mathrm{ctx}}<K$ under a very small $\alpha$, we set $N_{\mathrm{ctx}}:=K$ (still capped by $N-1$) so that the minimum-one-per-class constraint remains feasible.
After sampling, we deterministically shuffle indices within $\mathcal{C}_\alpha$ and $\mathcal{Q}_\alpha$ for bookkeeping.

\paragraph{Seed control and reproducibility.}
We decouple the randomness of {data splitting} from the randomness of {downstream training}.
Each run is parameterized by:
(i) a \texttt{split seed} $s_{\mathrm{split}}$ controlling the stratified split of $\mathcal{T}$ into $(\mathcal{C}_\alpha,\mathcal{Q}_\alpha)$, and
(ii) five \texttt{run seeds} $\{s_r\}_{r=1}^{5}$ controlling the stochasticity of downstream classifier training.
In our implementation, if the run seeds are not explicitly provided, they are generated by drawing five integers from a NumPy RNG initialized with a base seed.
Importantly, for a fixed dataset and $\alpha$, we keep the split $(\mathcal{C}_\alpha,\mathcal{Q}_\alpha)$ \emph{fixed across the five run seeds} to isolate variance due to classifier training (e.g., initialization and minibatch order).

\paragraph{Label re-indexing and class consistency.}
For every split, we re-index class labels according to the context labels:
we map the unique labels in $\mathcal{C}_\alpha$ to $\{0,\dots,K-1\}$ and apply the same mapping to $\mathcal{Q}_\alpha$.
If any query label is not present in the context label set, the run is invalid; in code this triggers an error.
In practice, the minimum-one-per-class constraint prevents such failures except for degenerate cases where some class appears only once in the pool (in which case it must belong to either $\mathcal{C}_\alpha$ or $\mathcal{Q}_\alpha$ under a disjoint split). We log these cases as split warnings.

\paragraph{Reporting.}
For each dataset $\mathcal{D}$ and fraction $\alpha$, we run five trials indexed by run seeds $\{s_r\}_{r=1}^{5}$ and compute accuracy on the query set $\mathcal{Q}_\alpha$.
We report the mean accuracy across the five trials as the final result for $(\mathcal{D},\alpha)$.

\subsubsection{Context Window Scaling Protocol}
\label{app:ctx_scaling}

\paragraph{Testbeds.}
Due to the large sample requirements, we use the three largest UCR datasets: Crop, ElectricDevices, and ECG5000.
Let $\mathcal{D}_{\mathrm{tr}}$ and $\mathcal{D}_{\mathrm{te}}$ denote the official training and test splits.

\paragraph{Query set and context pool.}
We construct a fixed query set $Q \subset \mathcal{D}_{\mathrm{te}}$ by stratified sampling such that class proportions match those of $\mathcal{D}_{\mathrm{te}}$, ensuring each class appears at least once. Concretely, $Q$ contains $10\%$ of $\mathcal{D}_{\mathrm{te}}$.
The context pool is defined as
\begin{equation*}
P \;=\; \mathcal{D}_{\mathrm{tr}} \cup \bigl(\mathcal{D}_{\mathrm{te}}\setminus Q\bigr),
\end{equation*}
so that $Q$ is disjoint from the context used for inference.

\paragraph{Context construction (class-balanced sampling).}
Given a context budget $N_{\mathrm{ctx}}$, we sample a context set $S_{N_{\mathrm{ctx}}}\subset P$ in a class-balanced manner.
Let $\mathcal{Y}=\{1,\ldots,C\}$ and $P_c=\{(x,y)\in P: y=c\}$.
We allocate $n_c=\left\lfloor N_{\mathrm{ctx}}/C \right\rfloor$ examples per class and distribute the remainder
$r=N_{\mathrm{ctx}}-Cn_c$ by adding one extra example to $r$ randomly chosen classes. If $|P_c|<n_c$, we use all available samples in $P_c$ and reallocate the remaining budget to other classes while preserving balance as much as possible.

\paragraph{Budgets and reporting.}
We sweep $N_{\mathrm{ctx}} \in \{N_0,5N_0,10N_0,15N_0,20N_0\}$, where $N_0=10C$ (roughly ten labeled examples per class at the smallest budget).
For each budget, we repeat context sampling five times with different random seeds and report the average accuracy on the fixed query set $Q$.

\subsection{Implementation and Hardware Specifications}
\label{app:imple}

\paragraph{Software Environment and Reproducibility.} 
To facilitate reproducibility, the complete source code is publicly available. Our method is implemented using \texttt{PyTorch} version 2.9.0 and \texttt{Python} 3.10. 
For the comparative time series foundation models, Mantis and MOMENT, we utilized their respective official open-source implementations and corresponding dependency configurations. 
The implementation of classifiers (e.g., SVM, RF, KNN) and evaluation metrics relies on \texttt{scikit-learn} version 1.7.0. 
Furthermore, the codebase features a modular design to facilitate the extension of new encoders or in-context adapters.

\paragraph{Hardware Infrastructure.} 
We utilized distinct hardware environments tailored to the computational demands of different experimental stages. 
The  pretraining phase was executed on a high-performance computing cluster equipped with 8 $\times$ AMD Instinct MI200 GPUs, each providing 64GB memory. 
In contrast, all subsequent experiments, including both the inference of our proposed method and the evaluation of all comparative baselines, were conducted on a computing node featuring 2 $\times$ NVIDIA GeForce RTX 3090 GPUs, each with 24GB of memory.

\paragraph{Architecture and hyperparameters.}
TIC-FM consists of a time series encoder $F_{\psi}$, a projection adapter $g_{\phi}$, and an in-context classifier $G_{\theta}$ with a latent-memory module. The encoder $F_{\psi}$ takes a univariate input sequence of length 512 and produces a 512-dimensional representation. It first applies a token generator built from two one-dimensional convolution layers with a kernel size of 17, and then aggregates the resulting tokens using a ViT backbone with 6 Transformer layers and dropout set to 0.1. The projection adapter is a 2-layer MLP with LayerNorm, hidden size $1024$, GELU, and dropout $0.118$ that preserves the feature dimension ($512\!\rightarrow\!1024\!\rightarrow\!512$).
The in-context classifier uses a $12$-block Transformer (4 heads, pre-norm) with a Perceiver-style latent memory of $32$ latents and $2$ write / $2$ read cross-attention blocks; labels are embedded via a one-hot linear map and decoded by an MLP ($512\!\rightarrow\!1024\!\rightarrow\!C_{\max}$) with $C_{\max}=10$. Detailed hyperparameters are provided in \cref{tab:arch_hparams}.


\begin{table}[h]
  \caption{\textbf{Architecture and key hyperparameters of TIC-FM.}}
  \label{tab:arch_hparams}
  \begin{center}
    \begin{small}
        \setlength{\tabcolsep}{4pt}
        \renewcommand{\arraystretch}{0.95}
        \begin{tabular*}{\columnwidth}{@{\extracolsep{\fill}}llr}
          \toprule
          \textbf{Module} & \textbf{Hyperparameter} & \textbf{Value} \\
          \midrule
          Encoder $F_{\psi}$ & Input length ($L$) & 512 \\
          & Feature dim ($d$) & 512 \\
          & Token-generator conv layers & 2 \\
          & Conv kernel size & 17 \\
          & ViT Transformer layers & 6 \\
          & Encoder dropout & 0.1 \\
          \midrule
          Adapter $g_{\phi}$ & MLP dims & $512 \!\rightarrow\! 1024 \!\rightarrow\! 512$ \\
          & Nonlinearity & GELU \\
          & LayerNorm & yes \\
          & Dropout & 0.118 \\
          \midrule
          ICL classifier $G_{\theta}$ & Transformer blocks & 12 \\
          & Attention heads & 4 \\
          & Norm-first (pre-norm) & yes \\
          \midrule
          Latent memory & \# latents & 32 \\
          & Write layers / Read layers & 2 / 2 \\
          \midrule
          Label space & $C_{\max}$ & 10 \\
          \bottomrule
        \end{tabular*}
    \end{small}
  \end{center}
\end{table}

\subsection{Additional Results and Statistical Analysis}
\label{app:additional_results}

\subsubsection{Per-dataset Results on the Full UCR Archive}

\label{app:mainsReluts}

Table~\ref{tab:ucr_full_results} reports per-dataset classification accuracy on all  UCR datasets for each backbone--classifier combination and TIC-FM. We include these results to complement the aggregate metrics in the main paper (average accuracy and mean rank) and to enable a fine-grained inspection of where improvements arise. In particular, per-dataset reporting helps assess whether gains are broadly distributed across datasets or concentrated in a small subset, and it makes explicit the sensitivity of freeze backbone and classifier evaluation to the choice of classifier.

Each entry is reported as the average accuracy computed over five independent runs with distinct random seeds. This protocol ensures that the reported performance is robust to the stochasticity inherent in classifier initialization and training.For deterministic classifiers under our protocol, the range can be zero. We emphasize that the range is included to expose the variability introduced by classifier training and data-dependent optimization, which is especially pronounced for non-convex heads such as MLP.

Two patterns are noteworthy. First, for a fixed backbone, the relative ordering of classifier heads can change substantially across datasets, indicating that downstream performance is often dominated by the classifier’s optimization behavior rather than solely by the frozen representations. Second, TIC-FM attains competitive or best performance on a large fraction of datasets without training a task-specific classifier, supporting our claim that inference-time conditioning provides a more reliable evaluation protocol when comparing time series foundation models.

\begingroup
\scriptsize
\setlength{\tabcolsep}{1.2pt}
\renewcommand{\arraystretch}{1.08}
\setlength{\LTleft}{0pt}
\setlength{\LTright}{0pt}

\begin{longtable}{@{\extracolsep{\fill}}l*{11}{r}}
\caption{Per-dataset classification accuracy (average) on the 128 UCR datasets. The best results are in \textbf{bold}.}
\label{tab:ucr_full_results}\\
\toprule
\multirow{2}{*}{\textbf{Dataset}} &
\multicolumn{5}{c}{\textbf{MOMENT}} &
\multicolumn{5}{c}{\textbf{Mantis}} &
\textbf{Ours} \\
\cmidrule(lr){2-6}\cmidrule(lr){7-11}
& \textbf{RF} & \textbf{SVM} & \textbf{MLP} & \textbf{kNN} & \textbf{NC}
& \textbf{RF} & \textbf{SVM} & \textbf{MLP} & \textbf{kNN} & \textbf{NC}
& \textbf{TIC-FM} \\
\midrule
\endfirsthead

\caption[]{Per-dataset classification accuracy (average ) on the 128 UCR datasets (continued). The best results are in \textbf{bold}.}\\
\toprule
\multirow{2}{*}{\textbf{Dataset}} &
\multicolumn{5}{c}{\textbf{MOMENT}} &
\multicolumn{5}{c}{\textbf{Mantis}} &
\textbf{Ours} \\
\cmidrule(lr){2-6}\cmidrule(lr){7-11}
& \textbf{RF} & \textbf{SVM} & \textbf{MLP} & \textbf{kNN} & \textbf{NC}
& \textbf{RF} & \textbf{SVM} & \textbf{MLP} & \textbf{kNN} & \textbf{NC}
& \textbf{TIC-FM} \\
\midrule
\endhead

\midrule
\multicolumn{12}{r}{\scriptsize Continued on next page} \\
\endfoot

\bottomrule
\endlastfoot

ACSF1 & \textbf{0.8040} & 0.6800 & 0.2560 & 0.7000 & 0.5800 & 0.7820 & 0.4760 & 0.3260 & 0.6800 & 0.5600 & 0.6700 \\
Adiac & \textbf{0.7918} & 0.2916 & 0.0261 & 0.7545 & 0.7212 & 0.7253 & 0.7673 & 0.4179 & 0.6547 & 0.5857 & 0.6573 \\
AllGestureWiimoteX & 0.6683 & 0.7129 & 0.3286 & 0.7043 & 0.5457 & 0.6609 & 0.6614 & 0.4060 & \textbf{0.7157} & 0.4286 & 0.7000 \\
AllGestureWiimoteY & 0.7023 & \textbf{0.7443} & 0.3454 & 0.7429 & 0.5543 & 0.6483 & 0.7186 & 0.3929 & 0.7357 & 0.3686 & 0.7243 \\
AllGestureWiimoteZ & 0.5840 & 0.6157 & 0.2389 & 0.5871 & 0.4171 & 0.6649 & 0.6686 & 0.3863 & 0.6386 & 0.3957 & \textbf{0.6686} \\
ArrowHead & \textbf{0.8229} & 0.6971 & 0.5577 & 0.7943 & 0.4571 & 0.7166 & 0.8103 & 0.5703 & 0.7543 & 0.5714 & 0.7771 \\
BME & 0.9720 & 0.9800 & 0.6240 & 0.9467 & 0.7933 & 0.9347 & \textbf{0.9933} & 0.6253 & 0.9533 & 0.6400 & 0.9533 \\
Beef & 0.7200 & 0.6000 & 0.3867 & 0.5667 & 0.5000 & 0.6533 & \textbf{0.7533} & 0.4733 & 0.6333 & 0.4333 & 0.7333 \\
BeetleFly & 0.9300 & \textbf{0.9500} & 0.8100 & \textbf{0.9500} & \textbf{0.9500} & 0.8300 & 0.8500 & 0.6900 & 0.8000 & 0.8000 & 0.9000 \\
BirdChicken & 0.8700 & 0.9000 & 0.7300 & 0.8500 & 0.8000 & 0.9900 & 0.9500 & 0.8800 & 0.9000 & \textbf{1.0000} & 0.7500 \\
CBF & 0.9240 & 0.9767 & 0.3313 & 0.9189 & 0.9167 & 0.9889 & \textbf{1.0000} & 0.8631 & \textbf{1.0000} & 0.9733 & 0.9989 \\
Car & 0.7600 & 0.7500 & 0.2133 & 0.8333 & 0.6333 & 0.7867 & \textbf{0.8700} & 0.5133 & 0.8333 & 0.6833 & 0.8333 \\
Chinatown & 0.9429 & 0.9650 & 0.6700 & 0.9329 & 0.9067 & 0.8426 & 0.9096 & 0.6974 & 0.7901 & 0.8630 & \textbf{0.9738} \\
ChlorineConcentration & 0.6822 & 0.5716 & 0.5326 & 0.6414 & 0.3104 & 0.6765 & \textbf{0.6888} & 0.5380 & 0.6242 & 0.3250 & 0.5656 \\
CinCECGTorso & 0.6662 & \textbf{0.7565} & 0.2790 & 0.7014 & 0.5529 & 0.6584 & 0.7539 & 0.4393 & 0.7246 & 0.5768 & 0.6964 \\
Coffee & 0.9429 & 0.8929 & 0.6500 & 0.9643 & 0.8929 & 0.9571 & \textbf{1.0000} & 0.7786 & 0.9286 & 0.9286 & \textbf{1.0000} \\
Computers & 0.7016 & 0.7280 & 0.6088 & 0.6800 & 0.5360 & 0.7288 & 0.7032 & 0.6560 & 0.6520 & 0.7240 & \textbf{0.7480} \\
CricketX & 0.6846 & 0.7154 & 0.1846 & 0.7231 & 0.5564 & 0.7328 & 0.6923 & 0.5667 & 0.7769 & 0.6462 & \textbf{0.8103} \\
CricketY & 0.6595 & 0.7308 & 0.1974 & 0.7154 & 0.5154 & 0.7374 & 0.7282 & 0.5544 & \textbf{0.8051} & 0.6308 & 0.7897 \\
CricketZ & 0.6846 & 0.7179 & 0.1087 & 0.6692 & 0.6026 & 0.7733 & 0.6795 & 0.5656 & \textbf{0.8179} & 0.6615 & 0.8077 \\
Crop & 0.6798 & \textbf{0.6994} & 0.4318 & 0.6662 & 0.4567 & 0.6689 & 0.6940 & 0.6395 & 0.6403 & 0.5346 & 0.6514 \\
DiatomSizeReduction & 0.8451 & 0.7092 & 0.3007 & \textbf{0.9542} & 0.8235 & 0.8575 & 0.8922 & 0.7745 & 0.9248 & 0.8954 & \textbf{0.9542} \\
DistalPhalanxOutlineAgeGroup & 0.7281 & 0.7554 & 0.4676 & 0.7050 & 0.7122 & \textbf{0.7885} & 0.7050 & 0.7439 & 0.7698 & 0.6978 & 0.7410 \\
DistalPhalanxOutlineCorrect & \textbf{0.8167} & 0.7862 & 0.5833 & 0.7355 & 0.4203 & 0.7543 & 0.7210 & 0.7659 & 0.7391 & 0.6775 & 0.7717 \\
DistalPhalanxTW & 0.6691 & 0.6691 & 0.3022 & 0.5827 & 0.6187 & 0.6820 & 0.6475 & 0.6345 & 0.6043 & 0.5396 & \textbf{0.6835} \\
DodgerLoopDay & 0.2975 & 0.4000 & 0.1825 & 0.2875 & 0.2500 & 0.4975 & 0.4975 & 0.3475 & 0.3875 & \textbf{0.5125} & 0.4750 \\
DodgerLoopGame & 0.7319 & \textbf{0.8406} & 0.6000 & 0.6812 & 0.5797 & 0.7246 & 0.7275 & 0.5696 & 0.7029 & 0.6087 & 0.5942 \\
DodgerLoopWeekend & 0.9043 & 0.9565 & 0.8203 & 0.8623 & 0.8696 & 0.9536 & 0.9710 & 0.9261 & 0.9493 & \textbf{0.9783} & 0.9420 \\
ECG200 & 0.8240 & \textbf{0.8700} & 0.6400 & 0.8100 & 0.7700 & 0.8220 & 0.8600 & 0.7520 & 0.8200 & 0.7700 & 0.8200 \\
ECG5000 & 0.9372 & \textbf{0.9447} & 0.8300 & 0.9222 & 0.8571 & 0.9213 & 0.9118 & 0.8929 & 0.9204 & 0.8051 & 0.9376 \\
ECGFiveDays & 0.7233 & 0.9721 & 0.4971 & 0.8908 & 0.5923 & 0.8997 & \textbf{0.9823} & 0.6246 & 0.8316 & 0.7085 & 0.9617 \\
EOGHorizontalSignal & 0.5613 & 0.5552 & 0.2989 & 0.5331 & 0.4696 & \textbf{0.5917} & 0.4724 & 0.4729 & 0.5773 & 0.5110 & 0.5801 \\
EOGVerticalSignal & 0.4823 & \textbf{0.5166} & 0.3155 & 0.4917 & 0.3978 & 0.4575 & 0.4558 & 0.3785 & 0.4779 & 0.3674 & 0.4834 \\
Earthquakes & 0.7439 & 0.7266 & 0.7482 & 0.6547 & 0.5468 & \textbf{0.7482} & 0.7050 & 0.7424 & 0.6906 & 0.4964 & 0.7482 \\
ElectricDevices & 0.7172 & \textbf{0.7513} & 0.5291 & 0.6832 & 0.5283 & 0.7228 & 0.7034 & 0.7012 & 0.6995 & 0.5818 & 0.7199 \\
EthanolLevel & 0.4356 & 0.3680 & 0.2848 & 0.3400 & 0.3100 & 0.2980 & \textbf{0.4940} & 0.2688 & 0.2560 & 0.2740 & 0.3100 \\
FaceAll & 0.7304 & \textbf{0.8077} & 0.2244 & 0.7349 & 0.4195 & 0.7807 & 0.8024 & 0.7564 & 0.7811 & 0.7775 & 0.7444 \\
FaceFour & 0.6886 & 0.7955 & 0.3023 & 0.7727 & 0.7955 & 0.9455 & \textbf{0.9659} & 0.7364 & 0.9545 & 0.9545 & 0.9205 \\
FacesUCR & 0.7355 & 0.8546 & 0.1374 & 0.7878 & 0.5541 & 0.8245 & 0.8840 & 0.4589 & \textbf{0.8932} & 0.8420 & 0.8893 \\
FiftyWords & 0.6422 & \textbf{0.7736} & 0.1266 & 0.6527 & 0.6000 & 0.6295 & 0.7516 & 0.4945 & 0.7275 & 0.6901 & 0.6923 \\
Fish & 0.8640 & 0.8629 & 0.1371 & 0.8686 & 0.8057 & 0.9383 & \textbf{0.9451} & 0.6274 & 0.9314 & 0.9143 & 0.8743 \\
FordA & 0.9262 & \textbf{0.9417} & 0.8577 & 0.8917 & 0.6841 & 0.8565 & 0.9106 & 0.8602 & 0.7939 & 0.7833 & 0.8962 \\
FordB & \textbf{0.8064} & 0.7938 & 0.6469 & 0.7481 & 0.5716 & 0.7341 & 0.7667 & 0.7420 & 0.6988 & 0.6407 & 0.7568 \\
FreezerRegularTrain & 0.8736 & 0.9488 & 0.7036 & 0.8449 & 0.7751 & 0.9354 & \textbf{0.9888} & 0.7874 & 0.9225 & 0.7572 & 0.9811 \\
FreezerSmallTrain & 0.7876 & 0.8077 & 0.6589 & 0.7688 & 0.7491 & 0.8022 & \textbf{0.8743} & 0.7535 & 0.7702 & 0.7453 & 0.8568 \\
Fungi & 0.9946 & \textbf{1.0000} & 0.3118 & \textbf{1.0000} & \textbf{1.0000} & 0.8022 & 0.8935 & 0.3817 & 0.8280 & 0.8280 & 0.7849 \\
GestureMidAirD1 & 0.6200 & 0.6538 & 0.2600 & 0.5846 & 0.5154 & 0.6462 & \textbf{0.7015} & 0.2631 & 0.6231 & 0.5538 & 0.6846 \\
GestureMidAirD2 & 0.5785 & 0.5231 & 0.2385 & 0.5308 & 0.4615 & 0.6138 & 0.5538 & 0.2585 & 0.5615 & 0.5615 & \textbf{0.6154} \\
GestureMidAirD3 & 0.3400 & 0.3538 & 0.1492 & \textbf{0.3692} & 0.2615 & 0.3400 & 0.3477 & 0.1538 & 0.3231 & 0.2615 & \textbf{0.3692} \\
GesturePebbleZ1 & 0.8779 & 0.9128 & 0.1930 & 0.8081 & 0.8140 & \textbf{0.9279} & 0.9186 & 0.7023 & 0.9070 & 0.8488 & 0.9070 \\
GesturePebbleZ2 & 0.8823 & 0.9114 & 0.3747 & 0.7405 & 0.8608 & 0.9241 & 0.8734 & 0.7266 & 0.8544 & \textbf{0.9494} & 0.8228 \\
GunPoint & 0.9840 & \textbf{0.9933} & 0.5440 & 0.9600 & 0.7333 & 0.9693 & 0.9867 & 0.8733 & 0.9800 & 0.9133 & \textbf{0.9933} \\
GunPointAgeSpan & 0.9766 & 0.9842 & 0.6424 & 0.9652 & 0.6361 & 0.9911 & \textbf{0.9949} & 0.8342 & 0.9905 & 0.9652 & 0.9842 \\
GunPointMaleVersusFemale & 0.9804 & 0.9842 & 0.5911 & 0.9810 & 0.9367 & \textbf{0.9968} & 0.9937 & 0.7987 & 0.9873 & 0.8703 & 0.9968 \\
GunPointOldVersusYoung & 0.9721 & 0.9492 & 0.5898 & 0.9683 & 0.5524 & 0.9968 & 0.9968 & 0.8267 & \textbf{1.0000} & 0.8603 & \textbf{1.0000} \\
Ham & 0.6724 & \textbf{0.7048} & 0.5657 & 0.5429 & 0.6476 & 0.6743 & 0.6590 & 0.5295 & 0.5524 & 0.6667 & 0.6571 \\
HandOutlines & 0.9097 & 0.9216 & 0.6405 & 0.8568 & 0.6784 & 0.9249 & \textbf{0.9324} & 0.8951 & 0.8757 & 0.6595 & 0.8811 \\
Haptics & 0.4929 & \textbf{0.4968} & 0.2136 & 0.4481 & 0.4156 & 0.4721 & 0.4481 & 0.3156 & 0.4416 & 0.4610 & 0.4481 \\
Herring & 0.6156 & 0.5938 & 0.5938 & 0.6094 & 0.5469 & 0.6375 & \textbf{0.7031} & 0.6250 & 0.5000 & 0.6719 & 0.6250 \\
HouseTwenty & 0.9445 & \textbf{0.9580} & 0.9143 & 0.9076 & 0.8908 & 0.9445 & 0.9496 & 0.8891 & 0.9412 & 0.9328 & 0.9496 \\
InlineSkate & 0.3593 & 0.3236 & 0.1622 & 0.4091 & 0.2345 & 0.3535 & 0.3298 & 0.2058 & 0.3782 & 0.2655 & \textbf{0.4273} \\
InsectEPGRegularTrain & 0.9510 & \textbf{1.0000} & 0.4763 & 0.9719 & 0.8916 & \textbf{1.0000} & \textbf{1.0000} & 0.9655 & \textbf{1.0000} & \textbf{1.0000} & \textbf{1.0000} \\
InsectEPGSmallTrain & 0.9309 & 0.9558 & 0.4763 & 0.8876 & 0.8594 & \textbf{1.0000} & \textbf{1.0000} & 0.9727 & \textbf{1.0000} & \textbf{1.0000} & 0.9799 \\
InsectWingbeatSound & 0.5391 & \textbf{0.5980} & 0.2957 & 0.4525 & 0.4152 & 0.5127 & 0.4651 & 0.3080 & 0.4369 & 0.4636 & 0.5162 \\
ItalyPowerDemand & 0.9320 & \textbf{0.9504} & 0.5621 & 0.9281 & 0.6453 & 0.9044 & 0.8939 & 0.7100 & 0.9213 & 0.8639 & 0.9174 \\
LargeKitchenAppliances & \textbf{0.8533} & 0.8480 & 0.7547 & 0.8320 & 0.7733 & 0.7904 & 0.8320 & 0.7045 & 0.7467 & 0.6533 & 0.7920 \\
Lightning2 & 0.7148 & 0.7541 & 0.5410 & 0.7705 & 0.5738 & 0.8033 & 0.7541 & 0.6131 & 0.8525 & 0.6885 & \textbf{0.8525} \\
Lightning7 & 0.6493 & 0.6849 & 0.2603 & 0.6438 & 0.7123 & 0.7534 & \textbf{0.7753} & 0.5397 & 0.6712 & 0.7260 & 0.7534 \\
Mallat & 0.8791 & 0.8768 & 0.1250 & 0.8857 & 0.9313 & 0.8829 & \textbf{0.9408} & 0.5167 & 0.9168 & 0.9356 & 0.9326 \\
Meat & 0.8867 & 0.8333 & 0.4000 & 0.8667 & 0.8333 & \textbf{0.9333} & 0.7633 & 0.4700 & 0.8500 & 0.9000 & 0.9000 \\
MedicalImages & 0.7361 & 0.7618 & 0.5145 & 0.7118 & 0.3039 & 0.6966 & 0.7053 & 0.5668 & 0.7053 & 0.4961 & \textbf{0.7829} \\
MelbournePedestrian & 0.8379 & 0.8421 & 0.4863 & 0.8233 & 0.5469 & 0.8999 & 0.9192 & 0.8376 & 0.8680 & 0.7589 & \textbf{0.9582} \\
MiddlePhalanxOutlineAgeGroup & 0.5636 & 0.6299 & 0.1883 & 0.5390 & 0.5779 & 0.5870 & 0.5195 & 0.5494 & 0.5000 & 0.5195 & \textbf{0.6364} \\
MiddlePhalanxOutlineCorrect & \textbf{0.8323} & 0.6529 & 0.5704 & 0.7388 & 0.6289 & 0.8055 & 0.8110 & 0.7024 & 0.7251 & 0.5911 & 0.8144 \\
MiddlePhalanxTW & 0.5623 & \textbf{0.5974} & 0.2727 & 0.4740 & 0.4675 & 0.5260 & 0.4481 & 0.5065 & 0.4870 & 0.3506 & 0.5714 \\
MixedShapesRegularTrain & 0.9235 & 0.9460 & 0.7722 & 0.9200 & 0.8186 & 0.9391 & 0.9431 & 0.8695 & \textbf{0.9530} & 0.8899 & 0.9443 \\
MixedShapesSmallTrain & 0.8646 & 0.8874 & 0.5742 & 0.8412 & 0.7996 & 0.8884 & 0.8957 & 0.7485 & 0.9105 & 0.8763 & \textbf{0.9122} \\
MoteStrain & 0.8909 & 0.8986 & 0.7455 & 0.8642 & 0.8203 & 0.9059 & 0.8818 & 0.8184 & 0.8834 & \textbf{0.9265} & 0.9185 \\
NonInvasiveFetalECGThorax1 & 0.8821 & \textbf{0.9033} & 0.0729 & 0.8539 & 0.8326 & 0.6159 & 0.8056 & 0.5421 & 0.4926 & 0.4539 & 0.8504 \\
NonInvasiveFetalECGThorax2 & 0.9122 & \textbf{0.9191} & 0.0703 & 0.8779 & 0.8656 & 0.6778 & 0.8412 & 0.5805 & 0.5878 & 0.5033 & 0.8718 \\
OSULeaf & 0.8562 & \textbf{0.8926} & 0.1818 & 0.8554 & 0.8099 & 0.8636 & 0.8843 & 0.6256 & 0.8802 & 0.8802 & 0.8430 \\
OliveOil & 0.8667 & 0.4000 & 0.4000 & 0.8333 & 0.7667 & \textbf{0.9133} & 0.5133 & 0.3933 & 0.8667 & 0.8667 & 0.6333 \\
PLAID & 0.7024 & 0.7523 & 0.2484 & 0.6909 & 0.2793 & 0.8086 & 0.8436 & 0.4737 & \textbf{0.8566} & 0.3110 & 0.7635 \\
PhalangesOutlinesCorrect & \textbf{0.8368} & 0.7016 & 0.6131 & 0.7716 & 0.6131 & 0.7699 & 0.7949 & 0.7378 & 0.7179 & 0.6131 & 0.7506 \\
Phoneme & 0.2782 & 0.2764 & 0.1113 & 0.2315 & 0.1493 & \textbf{0.3270} & 0.2608 & 0.1905 & 0.2716 & 0.2384 & 0.3149 \\
PickupGestureWiimoteZ & 0.6880 & 0.7200 & 0.3280 & 0.6600 & 0.6800 & \textbf{0.7920} & 0.7360 & 0.4520 & 0.7200 & 0.6800 & 0.7200 \\
PigAirwayPressure & 0.1135 & 0.0817 & 0.0481 & 0.1106 & 0.0721 & 0.4606 & 0.4548 & 0.1279 & 0.4423 & 0.4808 & \textbf{0.7067} \\
PigArtPressure & 0.7923 & 0.8606 & 0.2337 & 0.8510 & 0.7644 & 0.8885 & 0.7558 & 0.2077 & 0.9038 & 0.8365 & \textbf{0.9375} \\
PigCVP & 0.7317 & 0.7981 & 0.2654 & 0.7788 & 0.7500 & 0.7644 & 0.7904 & 0.2933 & 0.8173 & 0.7692 & \textbf{0.8750} \\
Plane & \textbf{1.0000} & \textbf{1.0000} & 0.0952 & \textbf{1.0000} & \textbf{1.0000} & \textbf{1.0000} & \textbf{1.0000} & 0.9162 & \textbf{1.0000} & \textbf{1.0000} & \textbf{1.0000} \\
PowerCons & 0.8922 & 0.9000 & 0.7067 & 0.8389 & 0.7278 & 0.9144 & 0.9189 & 0.8000 & 0.9333 & 0.8500 & \textbf{0.9611} \\
ProximalPhalanxOutlineAgeGroup & 0.8400 & 0.8488 & 0.4878 & 0.8098 & 0.8488 & \textbf{0.8576} & 0.8390 & 0.8517 & 0.7415 & 0.8146 & 0.8341 \\
ProximalPhalanxOutlineCorrect & \textbf{0.8763} & 0.7595 & 0.6838 & 0.8351 & 0.6392 & 0.8131 & 0.8419 & 0.7766 & 0.7663 & 0.6460 & 0.7801 \\
ProximalPhalanxTW & 0.8078 & \textbf{0.8098} & 0.3512 & 0.7415 & 0.7366 & 0.7659 & 0.6829 & 0.7532 & 0.7073 & 0.6098 & 0.7902 \\
RefrigerationDevices & 0.5392 & 0.4987 & 0.5163 & 0.4720 & 0.5013 & 0.5019 & \textbf{0.5413} & 0.4923 & 0.4747 & 0.5227 & 0.5387 \\
Rock & 0.6640 & 0.7400 & 0.5400 & 0.7600 & 0.5200 & 0.7640 & 0.8200 & 0.5960 & \textbf{0.8400} & 0.7200 & 0.6400 \\
ScreenType & 0.5056 & \textbf{0.5360} & 0.4123 & 0.4827 & 0.4240 & 0.4464 & 0.4373 & 0.4133 & 0.3787 & 0.4213 & 0.5173 \\
SemgHandGenderCh2 & 0.7443 & 0.7617 & 0.6317 & 0.6767 & 0.6567 & 0.8937 & 0.8700 & 0.7253 & 0.8850 & 0.7000 & \textbf{0.9183} \\
SemgHandMovementCh2 & 0.3876 & 0.4200 & 0.3084 & 0.3511 & 0.3289 & 0.7209 & 0.6644 & 0.4276 & 0.7222 & 0.4378 & \textbf{0.7489} \\
SemgHandSubjectCh2 & 0.5956 & 0.6556 & 0.4853 & 0.5311 & 0.5178 & 0.8031 & \textbf{0.8578} & 0.6018 & 0.8156 & 0.5156 & 0.8400 \\
ShakeGestureWiimoteZ & 0.9040 & 0.9000 & 0.6160 & 0.8000 & 0.9200 & 0.8840 & 0.8840 & 0.6720 & 0.8800 & 0.8800 & \textbf{0.9400} \\
ShapeletSim & 0.9322 & \textbf{0.9722} & 0.8511 & 0.9333 & 0.9000 & 0.9200 & 0.9222 & 0.7433 & 0.9056 & 0.9056 & 0.7944 \\
ShapesAll & 0.8610 & \textbf{0.8733} & 0.1040 & 0.8600 & 0.7650 & 0.8140 & 0.8317 & 0.7310 & 0.8650 & 0.8050 & 0.8483 \\
SmallKitchenAppliances & 0.7819 & 0.7440 & 0.6395 & 0.6667 & 0.6587 & 0.8112 & 0.7920 & 0.7989 & 0.7467 & 0.7920 & \textbf{0.8213} \\
SmoothSubspace & 0.9267 & \textbf{0.9667} & 0.6840 & 0.9000 & 0.8667 & 0.9080 & 0.9333 & 0.7173 & 0.8467 & 0.9267 & 0.9333 \\
SonyAIBORobotSurface1 & 0.8735 & 0.8968 & 0.4293 & 0.8985 & \textbf{0.9151} & 0.7704 & 0.8270 & 0.6296 & 0.8087 & 0.9018 & 0.7188 \\
SonyAIBORobotSurface2 & 0.9123 & \textbf{0.9570} & 0.6170 & 0.9129 & 0.8982 & 0.8306 & 0.9081 & 0.7343 & 0.8898 & 0.8416 & 0.8804 \\
StarLightCurves & 0.9763 & 0.9733 & 0.7364 & 0.9649 & 0.8738 & 0.9761 & 0.9648 & 0.9731 & 0.9703 & 0.9526 & \textbf{0.9796} \\
Strawberry & \textbf{0.9654} & 0.9216 & 0.6432 & 0.9595 & 0.5568 & 0.9503 & 0.9486 & 0.8654 & 0.9378 & 0.7892 & 0.9405 \\
SwedishLeaf & 0.9158 & 0.9376 & 0.0531 & 0.9072 & 0.8240 & 0.9274 & \textbf{0.9424} & 0.8125 & 0.9056 & 0.8880 & 0.9296 \\
Symbols & 0.9622 & 0.9678 & 0.1906 & 0.9608 & 0.9558 & 0.9574 & \textbf{0.9869} & 0.8953 & 0.9779 & 0.9628 & 0.9598 \\
SyntheticControl & 0.9340 & 0.9633 & 0.6500 & 0.9233 & 0.8633 & 0.9753 & 0.9867 & 0.9193 & 0.9733 & 0.9733 & \textbf{0.9933} \\
ToeSegmentation1 & 0.9351 & 0.9386 & 0.8316 & 0.9035 & 0.8772 & \textbf{0.9649} & 0.9605 & 0.8825 & 0.8991 & \textbf{0.9649} & 0.9035 \\
ToeSegmentation2 & 0.9077 & 0.9231 & 0.9138 & 0.9385 & 0.9077 & 0.9200 & \textbf{0.9462} & 0.8631 & \textbf{0.9462} & 0.9385 & 0.9077 \\
Trace & \textbf{1.0000} & \textbf{1.0000} & 0.1900 & \textbf{1.0000} & 0.9900 & \textbf{1.0000} & \textbf{1.0000} & 0.8640 & \textbf{1.0000} & \textbf{1.0000} & \textbf{1.0000} \\
TwoLeadECG & 0.9861 & 0.9956 & 0.4997 & 0.9789 & 0.9464 & 0.9961 & \textbf{0.9963} & 0.7716 & 0.9860 & 0.9877 & 0.9903 \\
TwoPatterns & 0.8866 & 0.9838 & 0.4228 & 0.8293 & 0.7540 & 0.8708 & 0.9670 & 0.8007 & 0.8350 & 0.7275 & \textbf{0.9852} \\
UMD & 0.9806 & 0.9792 & 0.6542 & 0.9861 & 0.8194 & 0.9694 & 0.9931 & 0.7472 & 0.9931 & 0.6667 & \textbf{0.9931} \\
UWaveGestureLibraryAll & 0.8257 & \textbf{0.9227} & 0.4342 & 0.8205 & 0.6309 & 0.8382 & 0.8814 & 0.8012 & 0.8504 & 0.7741 & 0.8889 \\
UWaveGestureLibraryX & 0.7518 & 0.7954 & 0.4577 & 0.7406 & 0.6667 & 0.7614 & 0.7647 & 0.7249 & 0.7507 & 0.7164 & \textbf{0.8132} \\
UWaveGestureLibraryY & 0.6901 & 0.7281 & 0.3266 & 0.6616 & 0.5103 & 0.6747 & 0.6647 & 0.6386 & 0.6907 & 0.6287 & \textbf{0.7426} \\
UWaveGestureLibraryZ & 0.7061 & 0.7398 & 0.3777 & 0.6904 & 0.5771 & 0.7225 & 0.7365 & 0.6853 & 0.7083 & 0.6672 & \textbf{0.7661} \\
Wafer & 0.9789 & 0.9974 & 0.8921 & 0.9825 & 0.8251 & 0.9903 & \textbf{0.9976} & 0.9581 & 0.9903 & 0.8214 & 0.9916 \\
Wine & 0.6667 & 0.5000 & 0.5000 & 0.5741 & 0.5185 & 0.7667 & 0.5000 & 0.5000 & 0.5926 & 0.7037 & \textbf{0.8704} \\
WordSynonyms & 0.5404 & 0.6160 & 0.2194 & 0.5940 & 0.2633 & 0.5448 & 0.6285 & 0.3273 & \textbf{0.6928} & 0.4734 & 0.6176 \\
Worms & \textbf{0.8442} & 0.7792 & 0.4286 & 0.7662 & 0.7922 & 0.6260 & 0.6753 & 0.5039 & 0.6104 & 0.7143 & 0.6753 \\
WormsTwoClass & \textbf{0.8364} & 0.8052 & 0.5714 & 0.8312 & 0.7532 & 0.7922 & 0.7818 & 0.6753 & 0.6883 & 0.6234 & 0.7662 \\
Yoga & 0.8057 & 0.7300 & 0.5357 & 0.8400 & 0.5573 & 0.8099 & 0.7850 & 0.6311 & 0.8267 & 0.6167 & \textbf{0.8527} \\


\end{longtable}

\endgroup


\subsubsection{Extended Analysis of Scalability with Supervision Budgets}
\label{app:trainFra}
In Figure~\ref{fig:train_fraction_scaling_app}, we present the training-fraction scaling curves for the baseline configurations that are omitted from the main text (Figure~\ref{fig:train_fraction_scaling}) for clarity.
As illustrated, {TIC-FM consistently achieves the highest accuracy across all label fractions}, demonstrating its robustness compared to both parametric (e.g., MLP) and non-parametric (e.g., KNN, NC) classifiers.
Detailed numerical results for all methods under varying supervision budgets are provided in Table~\ref{tab:train_fraction_scaling}.

\begin{figure}[t]
  \centering
   \includegraphics[width=0.80\linewidth]{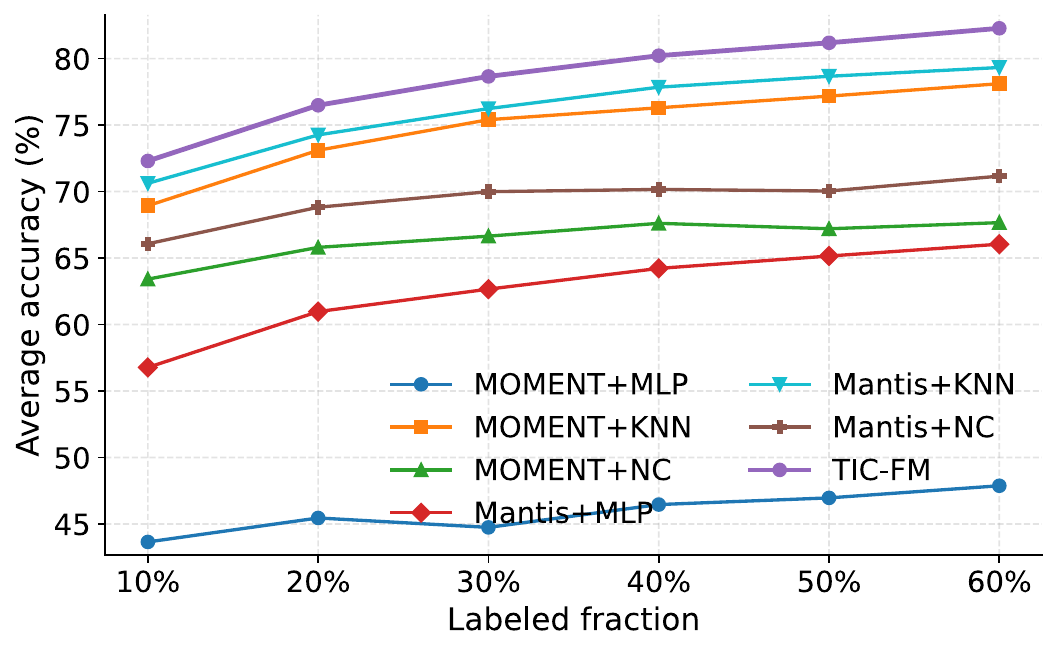} 
  \caption{\textbf{Extended scalability analysis with varying labeled data fractions.} This figure complements Figure~\ref{fig:train_fraction_scaling} by illustrating the performance of TIC-FM against the full set of baseline configurations, including parametric (e.g., MLP) and non-parametric (e.g., KNN, NC) classifiers omitted from the main text. \textbf{Observations:} TIC-FM consistently outperforms all baseline variants across all supervision budgets.}
  \label{fig:train_fraction_scaling_app}
\end{figure}

\begin{table}[h]
\centering
\caption{Average accuracy under different labeled fractions on the UCR test split.The best results are in \textbf{bold}.}
\label{tab:train_fraction_scaling}
\begin{tabular}{lcccccc}
\toprule
\textbf{Method} & \textbf{0.1} & \textbf{0.2} & \textbf{0.3} & \textbf{0.4} & \textbf{0.5} & \textbf{0.6} \\
\midrule
MOMENT+RF  & 0.6919 & 0.7460 & 0.7601 & 0.7839 & 0.7878 & 0.8030 \\
MOMENT+SVM & 0.7013 & 0.7530 & 0.7696 & 0.7869 & 0.7836 & 0.7988 \\
MOMENT+MLP & 0.4364 & 0.4545 & 0.4474 & 0.4646 & 0.4696 & 0.4787 \\
MOMENT+KNN & 0.6893 & 0.7310 & 0.7540 & 0.7630 & 0.7718 & 0.7811 \\
MOMENT+NC  & 0.6341 & 0.6580 & 0.6665 & 0.6761 & 0.6720 & 0.6766 \\
Mantis+RF  & 0.6957 & 0.7492 & 0.7687 & 0.7871 & 0.7923 & 0.8092 \\
Mantis+SVM & 0.7117 & 0.7482 & 0.7715 & 0.7845 & 0.7945 & 0.8079 \\
Mantis+MLP & 0.5677 & 0.6097 & 0.6266 & 0.6422 & 0.6515 & 0.6604 \\
Mantis+KNN & 0.7062 & 0.7426 & 0.7624 & 0.7785 & 0.7867 & 0.7933 \\
Mantis+NC  & 0.6607 & 0.6882 & 0.6999 & 0.7016 & 0.7004 & 0.7116 \\
\textbf{TIC-FM} & \textbf{0.7230} & \textbf{0.7649} & \textbf{0.7866} & \textbf{0.8022} & \textbf{0.8119} & \textbf{0.8228} \\
\bottomrule
\end{tabular}
\end{table}

\subsubsection{Extended Analysis of Context Window Size}
\label{app:contextWindow}

Table~\ref{tab:context_length_scaling_values} reports the numerical values corresponding to Figure~\ref{fig:context_length_scaling}, where the context budget is parameterized by the multiplier $m=N_{\mathrm{ctx}}/N_0$ (with $N_0=10C$ in our setup). Overall, increasing the number of labeled context examples consistently improves query accuracy across all three datasets (Crop, ElectricDevices, and ECG5000). This finding supports the central premise of TIC-FM, demonstrating that task adaptation is effectively achieved through inference-time conditioning on labeled context rather than by training a task-specific classifier. From an optimization perspective, enlarging the context budget yields a more reliable empirical estimate of the class-conditional structure, thereby enhancing the matching between context and query instances and reducing decision ambiguity.

Across datasets, the gains are most pronounced in the low-context regime. On ElectricDevices, accuracy increases sharply from 58.97\% at $m{=}1$ to 77.86\% at $m{=}5$, indicating that modest additional supervision can substantially enhance in-context reasoning when labeled support is scarce. Beyond $m{=}10$, improvements become incremental (81.24\% $\rightarrow$ 82.19\%), suggesting diminishing returns once the context set becomes sufficiently representative. A similar trend holds for Crop, which improves from 53.91\% at $m{=}1$ to 61.96\% at $m{=}5$, followed by gradual saturation (63.97\% at $m{=}10$ and 65.02\% at $m{=}20$). ECG5000 exhibits the same pattern with a stronger overall baseline, rising from 69.69\% at $m{=}1$ to 86.53\% at $m{=}5$, and continuing to improve more moderately as $m$ increases (90.09\% at $m{=}10$ and 92.27\% at $m{=}20$). This behavior aligns with the analysis in Section~\ref{sec:theory}: larger context budgets provide more informative conditioning signals that induce optimization-like refinement of activations, with decreasing marginal benefit once contextual evidence becomes strong.

\begin{table}[H]
\centering
\caption{Context length scaling results (accuracy, \%) on Crop and ElectricDevices. The context multiplier is $m=N_{\mathrm{ctx}}/N_0$.}
\label{tab:context_length_scaling_values}
\begin{tabular}{c ccc}
\toprule
$m$ & Crop (C=24, $N_0$=240) & ElectricDevices (C=7, $N_0$=70)  & ECG5000 (C=5, $N_0$=50) \\
\midrule
1  & 53.91 & 58.97 & 69.69\\
5  & 61.96 & 77.86 & 86.53 \\
10 & 63.97 & 81.24 & 90.09 \\
15 & 64.70 & 81.32 & 91.73 \\
20 & 65.02 & 82.19  & 92.27 \\
\bottomrule
\end{tabular}
\end{table}

\end{document}